\RequirePackage[utf8]{inputenc}
\makeatletter
\def\cup@reference@code{%
  \RequirePackage[
    style=numeric,
    sorting=none,
    backend=biber,
    natbib,
    maxbibnames=99
  ]{biblatex}%
  \renewcommand*{\bibfont}{\footnotesize}%
}
\makeatother

\documentclass[
  journal=medium,
  manuscript=article,
  manuscriptlabel={Research Preprint},
  logo=false,
  year=2026,
  volume=1
]{cup-journal}

\usepackage[T1]{fontenc}
\usepackage{microtype}
\usepackage{amsmath,amssymb,amsthm}
\usepackage{mathtools}
\usepackage{booktabs}
\usepackage{adjustbox}
\usepackage{makecell,multirow}
\usepackage{subcaption}
\usepackage{orcidlink}
\usepackage{needspace}
\definecolor{primary}{HTML}{006BA2}
\definecolor{accent}{HTML}{E3120B}
\definecolor{neutral}{HTML}{4A4A4A}
\definecolor{boxfill}{HTML}{E8F4FD}
\definecolor{boxborder}{HTML}{006BA2}
\definecolor{defcolor}{HTML}{2E7D32}
\definecolor{todobg}{HTML}{FFF4E5}
\definecolor{todoframe}{HTML}{B45309}

\usepackage{thmtools}
\usepackage{tcolorbox}
\tcbuselibrary{skins,breakable}
\usepackage{hyperref}
\usepackage{cleveref}

\RawFloats[figure,table]

\ifpdf
  \DeclareGraphicsExtensions{.pdf,.png,.jpg}
\else
  \DeclareGraphicsExtensions{.eps}
\fi

\newcommand{\R}{\mathbb{R}}
\newcommand{\E}{\mathbb{E}}
\DeclarePairedDelimiter{\norm}{\lVert}{\rVert}

\DeclareMathOperator{\rank}{rank}

\hypersetup{
  colorlinks=true,
  linkcolor=primary,
  citecolor=primary,
  urlcolor=accent,
  bookmarksnumbered=true,
  bookmarksopen=true,
  pdflang={en-US}
}

\crefname{equation}{equation}{equations}
\Crefname{equation}{Equation}{Equations}
\crefname{figure}{Figure}{Figures}
\Crefname{figure}{Figure}{Figures}
\crefname{table}{Table}{Tables}
\Crefname{table}{Table}{Tables}
\crefname{section}{section}{sections}
\Crefname{section}{Section}{Sections}
\crefname{theorem}{theorem}{theorems}
\Crefname{theorem}{Theorem}{Theorems}
\crefname{lemma}{lemma}{lemmas}
\Crefname{lemma}{Lemma}{Lemmas}
\crefname{proposition}{proposition}{propositions}
\Crefname{proposition}{Proposition}{Propositions}
\crefname{corollary}{corollary}{corollaries}
\Crefname{corollary}{Corollary}{Corollaries}
\crefname{conjecture}{conjecture}{conjectures}
\Crefname{conjecture}{Conjecture}{Conjectures}
\crefname{remark}{remark}{remarks}
\Crefname{remark}{Remark}{Remarks}
\crefname{openproblem}{open problem}{open problems}
\Crefname{openproblem}{Open Problem}{Open Problems}

\makeatletter
\@ifpackageloaded{xpatch}{}{\usepackage{xpatch}}
\def\cup@journal@name{Product-Aware Deterministic Rounding}
\def\cup@manuscript{preprint}
\patchcmd{\@maketitle}{\vspace*{\baselineskip}}{\vspace*{0.2\baselineskip}}{}{}
\patchcmd{\@maketitle}{(\cup@year), {\volumefont\cup@vol}, \thepage--\pageref{LastPage}}{}{}{}
\renewcommand*\cup@maketitle@extras@hook{%
  \begingroup
  \renewcommand{\thefootnote}{\fnsymbol{footnote}}%
  \footnotetext{This manuscript has been authored by UT-Battelle, LLC under
  Contract No.\ DE-AC05-00OR22725 with the U.S. Department of Energy. The
  publisher, by accepting the article for publication, acknowledges that the
  United States Government retains a non-exclusive, paid-up, irrevocable,
  world-wide license to publish or reproduce the published form of this
  manuscript, or allow others to do so, for United States Government purposes.
  The Department of Energy will provide public access to these results of
  federally sponsored research in accordance with the DOE Public Access Plan
  (\url{https://www.energy.gov/doe-public-access-plan}).}%
  \endgroup
}

\renewcommand\appendix{\par
  \setcounter{section}{0}%
  \setcounter{subsection}{0}%
  \gdef\thesection{\@Alph\c@section}%
  \gdef\theHsection{\@Alph\c@section}%
  \gdef\theHsubsection{\theHsection.\arabic{subsection}}%
  \gdef\section@cntformat{Appendix \thesection.\quad}%
}
\makeatother

\declaretheoremstyle[
  headfont=\bfseries\color{primary},
  notefont=\normalfont,
  bodyfont=\itshape,
  headpunct={.},
  spaceabove=0pt,
  spacebelow=0pt
]{cupthmplain}
\declaretheoremstyle[
  headfont=\bfseries\color{defcolor},
  notefont=\normalfont,
  bodyfont=\normalfont,
  headpunct={.},
  spaceabove=0pt,
  spacebelow=0pt
]{cupthmdef}
\declaretheoremstyle[
  headfont=\itshape\bfseries\color{neutral},
  notefont=\normalfont,
  bodyfont=\normalfont,
  headpunct={.},
  spaceabove=0pt,
  spacebelow=0pt
]{cupthmrem}

\declaretheorem[style=cupthmplain,name=Theorem,numberwithin=section]{theorem}
\declaretheorem[style=cupthmplain,name=Lemma,sibling=theorem]{lemma}

\declaretheorem[style=cupthmplain,name=Corollary,sibling=theorem]{corollary}

\declaretheorem[style=cupthmdef,name=Open Problem,sibling=theorem]{openproblem}
\declaretheorem[style=cupthmrem,name=Remark,sibling=theorem]{remark}

\tcbset{
  cupresultbox/.style={
    enhanced,breakable,sharp corners,boxrule=0pt,frame hidden,
    colback=primary!5,borderline west={2pt}{0pt}{primary},
    left=7pt,right=6pt,top=4pt,bottom=4pt,
    before skip=6pt,after skip=6pt
  },
  cupdefinitionbox/.style={
    enhanced,breakable,sharp corners,boxrule=0pt,frame hidden,
    colback=defcolor!6,borderline west={2pt}{0pt}{defcolor},
    left=7pt,right=6pt,top=4pt,bottom=4pt,
    before skip=6pt,after skip=6pt
  },
  cupremarkbox/.style={
    enhanced,breakable,sharp corners,boxrule=0pt,frame hidden,
    colback=neutral!4,borderline west={2pt}{0pt}{neutral},
    left=7pt,right=6pt,top=4pt,bottom=4pt,
    before skip=6pt,after skip=6pt
  }
}
\tcolorboxenvironment{theorem}{cupresultbox}
\tcolorboxenvironment{lemma}{cupresultbox}
\tcolorboxenvironment{proposition}{cupresultbox}
\tcolorboxenvironment{corollary}{cupresultbox}
\tcolorboxenvironment{conjecture}{cupresultbox}
\tcolorboxenvironment{definition}{cupdefinitionbox}
\tcolorboxenvironment{assumption}{cupdefinitionbox}
\tcolorboxenvironment{openproblem}{cupdefinitionbox}
\tcolorboxenvironment{remark}{cupremarkbox}

\title[Product-Aware Deterministic Rounding]{Product-Aware Deterministic
Rounding for Quantized Matrix Multiplication}
\author{Piyush Sao\,\orcidlink{0000-0002-9432-5855}}
\email[Piyush Sao]{saopk@ornl.gov}
\author{Narasinga Miniskar\,\orcidlink{0000-0001-8259-8891}}
\author{Pedro Valero-Lara\,\orcidlink{0000-0002-1479-4310}}
\author{Keita Teranishi\,\orcidlink{0000-0001-6647-2690}}
\author{Sudip Seal\,\orcidlink{0000-0003-3233-0656}}
\affiliation{Oak Ridge National Laboratory, Oak Ridge, Tennessee 37831, USA}
\keywords{matrix quantization; discrepancy theory; deterministic rounding;
matrix products; post-training quantization}
\hypersetup{
  pdftitle={Product-Aware Deterministic Rounding for Quantized Matrix Multiplication},
  pdfauthor={Piyush Sao, Narasinga Miniskar, Pedro Valero-Lara, Keita Teranishi, Sudip Seal},
  pdfsubject={Deterministic product-aware rounding in dynamic and static matrix settings},
  pdfkeywords={matrix quantization, discrepancy theory, deterministic rounding, matrix products, post-training quantization}
}

\begin{document}
\maketitle

\begin{abstract}
Quantized matrix multiplication usually rounds each scalar independently,
although matrix multiplication combines the resulting errors.  We study
deterministic rounding after scales, clipping bounds, and grids are fixed,
so each active scalar has two admissible choices: round down or round up.

For dynamic activation rounding, we prove that a rank-$r$ weight block lets
all but at most $r$ fractional decisions be resolved while preserving the
relaxed product.  Conditional-expectation completion yields a deterministic
polynomial-time construction with squared product error at most
$\operatorname{OPT}_{\mathrm{dyn}}+r\nu_{\max}^2/4$, where
$\operatorname{OPT}_{\mathrm{dyn}}$ is the best admissible error and
$\nu_{\max}$ is the largest active row norm after weighting by the local
grid gap.  This block-specific guarantee is most favorable when row energy
is distributed across many rows.  For static weight rounding, the exact
expected product-loss metric is the uncentered input second moment when
output bias is fixed; free bias recalibration reduces the metric to centered
covariance.  Exact optimization is NP-hard even at rank one.

In balanced synthetic blocks, coordinated error decreases as row count grows
relative to rank.  At $K=1024,r=16$, conditional-expectation completion reaches
a dither-normalized median error of $0.010$, compared with $0.899$ for
round-to-nearest.  Clipping-aware initialization reduces median normalized error by a
factor of $43.4$ at ten-percent clipping.  On Digits, the bias convention
determines the outcome: centered fixed-bias rounding has higher median
held-out product error than round-to-nearest in all four tested bit-width
and calibration-size settings, while uncentered
and centered-plus-bias rounding have lower medians.

\end{abstract}

\section{Introduction}
\label{sec:disc-introduction}

Quantized matrix multiplication usually rounds each scalar independently.
These choices can be suboptimal for the product: matrix
multiplication combines scalar errors, so errors from different coordinates
can cancel.  Consider the product of the row $x=(0.45,\,0.45)$ with the
two-row block $(1,\,2)^\top$.  Rounding both entries to nearest gives the
quantized product $0$ and error $-1.35$; rounding the first entry up and the
second down gives product $1$ and error $-0.35$.  Under an entrywise rule,
errors may still cancel by chance, but the choice at
one entry does not depend
on the weighted error created by another.

To coordinate these choices, we fix all scales, clipping bounds, and grid
levels before rounding $Y=XW$.  Every scalar strictly between adjacent
levels then has two admissible choices.  In \emph{dynamic rounding}, an
activation row $x$ is available at run time, and its rounded copy may depend
on the weight block $W_{\mathrm{blk}}$ it multiplies.  The objective is
$\norm*{(\widehat x-x)W_{\mathrm{blk}}}_2^2$.  This regime measures the
cancellation available when both operands of a row--block product are known,
providing a per-block guarantee that reusable rounding rules can target.  In
\emph{static rounding}, one rounded weight matrix $\widehat W$ is stored and
reused across future inputs, with objective
$\E\norm*{x(\widehat W-W)}_2^2$.  \Cref{fig:disc-regimes} summarizes the
information and reuse constraints.

A rounding rule should respect these fixed choices, account for their joint
product error, and provide a guarantee at feasible computational cost.
Round-to-nearest satisfies the scalar constraint but chooses each entry
without reference to the other entries' product errors.  Enumerating all
binary patterns finds the best product but uses exponentially many choices.
Calibration-based weight reconstruction already coordinates decisions using
activation data, as in AdaRound and GPTQ \cite{adaround,gptq}.  The questions
here are how low rank gives a deterministic additive certificate for an
observed row--block product, and how the bias convention determines the
exact expected loss for reusable weights.

Our dynamic construction first minimizes a continuous relaxation, then moves
along directions that preserve the relaxed product until at most $r$
decisions remain fractional.  It completes each remaining choice by keeping
the endpoint with the smaller conditional expected error.  This gives a
deterministic additive guarantee of $r\nu_{\max}^2/4$, where $r$ is the
active gap-weighted block's rank and $\nu_{\max}$ its largest row norm.
The effective row count in \Cref{sec:disc-dynamic-comparison} identifies when
this guarantee is small: the product energy must be distributed across many
rows relative to rank.

For static rounding, expected product error separates into a systematic shift
from the input mean and variation around that shift.  Fixed output bias
retains both terms, giving the uncentered second moment as the exact metric.
Free bias recalibration absorbs the mean term, leaving centered covariance.
The distinction predicts a failure mode observed on Digits: centering with
fixed bias yields higher median held-out product error than round-to-nearest
in every tested setting.  Exact optimization is NP-hard even at rank one,
which motivates the additive form of the dynamic guarantee.

\begin{figure}[!tbp]
  \centering
  \begin{tikzpicture}[
    box/.style={draw, rounded corners, align=center, inner sep=5pt,
      text width=0.30\linewidth, minimum height=0.95cm},
    flow/.style={->, line width=0.6pt}
  ]
    \node[box] (fixed) at (0,0)
      {Fixed scalar grids\\each active scalar: lower or upper};
    \node[box] (dynamic) at (-3.05,-2.0)
      {\textbf{Dynamic}\\
       $x$, $W_{\mathrm{blk}}\mapsto\widehat x(W_{\mathrm{blk}})$\\[1pt]
       $\min\norm*{(\widehat x-x)W_{\mathrm{blk}}}_2^2$\\[1pt]
       one rounded copy per block};
    \node[box] (static) at (3.05,-2.0)
      {\textbf{Static}\\
       distribution of $x$, fixed $W\mapsto\widehat W$\\[1pt]
       $\min\E\norm*{x(\widehat W-W)}_2^2$\\[1pt]
       one rounded copy reused};
    \draw[flow] (fixed.south) -- (0,-0.5) -| (dynamic.north);
    \draw[flow] (fixed.south) -- (0,-0.5) -| (static.north);
  \end{tikzpicture}
  \caption{The two rounding regimes.  Both coordinate the same lower-or-upper
  scalar choices, but dynamic rounding re-rounds an observed activation row
  for each weight block it multiplies, whereas static rounding commits to one
  rounded weight matrix for a distribution of future inputs.}
  \label{fig:disc-regimes}
\end{figure}

\paragraph{Contributions.}

\begin{enumerate}
\item \textbf{Dynamic rounding.}  We express the choice of up/down
  decisions as binary least squares (\Cref{sec:disc-binary-objective}) and give a
  polynomial-time deterministic algorithm whose squared error is at most
  the binary optimum plus $r\nu_{\max}^2/4$
  (\Cref{thm:disc-additive}).  Here $r$ is the rank of the active
  gap-weighted block, and $\nu_{\max}$ is its largest row norm.
\item \textbf{Static rounding.}  For weights rounded once and reused across
  inputs, expected product error is measured by the uncentered second moment
  $M=\E[x^\top x]$, with minimization separating by column whenever the
  fixed feasible set does; free output-bias recalibration replaces $M$ by the
  centered covariance (\Cref{thm:disc-static,rem:disc-bias}).
\item \textbf{Complexity.}  Exact optimization remains hard even at
  rank one.  Reductions from \textsc{Partition} and \textsc{Max-Cut}
  establish barriers to exact and multiplicative guarantees, explaining
  the additive form above (\Cref{thm:disc-partition,thm:disc-maxcut}).
\item \textbf{Empirical validation.}  In balanced synthetic blocks, coordinated
  error decreases as the row count grows relative to rank.  Concentrated
  row energy weakens the certified bound, and clipping-aware initialization
  reduces the measured normalized error by a factor of $43.4$ at ten-percent clipping.
  On Digits, centering without bias correction gives higher median
  held-out error than round-to-nearest in every tested configuration
  (\Cref{sec:disc-experiments}).
\end{enumerate}

The constructions make product-aware cancellation explicit under each reuse
constraint.  Retaining this cancellation in one rounded activation copy
shared across blocks is the central next problem
(\Cref{op:disc-reuse}).

\paragraph{Organization.}
\Cref{sec:disc-setting} fixes the scalar feasible sets and the
independent-decision references.  \Cref{sec:disc-dynamic} treats dynamic
rounding, \Cref{sec:disc-static} the static metric, and
\Cref{sec:disc-hardness} the complexity barriers.
\Cref{sec:disc-experiments} reports the experiments.
\Cref{sec:disc-related} connects the results to earlier quantizers,
and discrepancy theory; \Cref{sec:disc-discussion}
discusses limitations and open problems.  \Cref{sec:disc-extensions}
collects extensions toward reusable rounding.

\section{Fixed-grid rounding choices}
\label{sec:disc-setting}

With scales, clipping bounds, and grids fixed, each coordinate has either
two adjacent-level choices or one forced value.  We first collect these
choices into an error vector, then define the independent-rounding scales
used to measure the benefit of coordination.

\subsection{Adjacent-level feasible sets}

Let a scalar $z$ be quantized on a fixed grid of representable values
and within a fixed clipping interval.  If $z$ lies
strictly between adjacent levels $\ell<u$, write
\[
  z=\ell+\theta d,\qquad
  d=u-\ell>0,\qquad \theta\in(0,1).
\]
The fraction $\theta$ measures how far $z$ lies from the lower level toward
the upper one; we call it the \emph{offset fraction}.  We call such a coordinate
\emph{active}: choosing the lower or upper level is
equivalent to choosing $\chi\in\{0,1\}$, and the resulting error is
\begin{equation}
  \widehat z-z=d(\chi-\theta).
  \label{eq:disc-scalar-error}
\end{equation}
For example, on the integer grid ($d=1$), the value $z=3.7$ has
$\ell=3$, $u=4$, and $\theta=0.7$.  Rounding down leaves error $-0.7$,
whereas rounding up leaves error $+0.3$; both are
$d(\chi-\theta)$ with $\chi\in\{0,1\}$.  We study how to coordinate many such
errors against the rows they multiply.

For a vector of $K$ scalars, the non-active coordinates are fixed.  An
exactly representable value has zero error; clipping forces the error at
the relevant endpoint.  These fixed errors must enter the product along
with the errors we can still choose.

Let $U=\{k_1<\cdots<k_{|U|}\}$ index the active choices in increasing
coordinate order, and let $F$ index the fixed coordinates.  All vectors
indexed by $U$ use this order.
Collect the fixed errors in $e^{\mathrm{fix}}\in\R^K$, with zeros on $U$.
The matrix $D\in\R^{K\times |U|}$ puts each adjustable error back in its
original coordinate and multiplies it by the local step size: column $j$
is $d_{k_j}$ times the $k_j$th standard basis vector.
Collecting these two-way choices into one vector gives
\begin{equation}
  e=e^{\mathrm{fix}}+D(\chi-\theta),
  \qquad \chi\in\{0,1\}^{|U|},\quad
  \theta\in(0,1)^{|U|}.
  \label{eq:disc-affine-error}
\end{equation}
This affine form covers unequal step sizes.  Uniform signed-integer grids
are a special case with $d_k=\Delta$ on all active coordinates.

We use \eqref{eq:disc-affine-error} in two ways.  In the dynamic regime,
$e$ is the rounding error in one observed activation row and is coordinated
for a fixed weight block $W_{\mathrm{blk}}\in\R^{K\times p}$, where $p$ is
the output block width.  We store the coordinate errors in a column $e$, so the product error is
the row $e^\top W_{\mathrm{blk}}$.  We write
$g^{\mathrm{fix}}=W_{\mathrm{blk}}^\top e^{\mathrm{fix}}\in\R^p$ for the contribution
forced by fixed coordinates.  In the static regime, $e$ is one column of the
error in a reusable weight factor $W\in\R^{K\times n}$ and is coordinated
for a distribution of input rows (\Cref{sec:disc-static}).  We measure
squared Euclidean output error because its expectation is the product MSE
used for calibration; summing over activation rows gives squared Frobenius
error for a full product.

\paragraph{Running example.}
We will reuse a four-coordinate dynamic instance to trace the construction.
Take $K=4$, a one-column block $W_{\mathrm{blk}}=(1,1,1,1)^\top$, unit steps,
no fixed error, and $\theta=(0.2,0.3,0.6,0.8)^\top$.  For any binary choice
$\chi$, the scalar product error is
\[
  (1,1,1,1)^\top\!\cdot(\chi-\theta)
  =\sum_{k=1}^4\chi_k-1.9.
\]
\Cref{sec:disc-dynamic} follows this instance through the binary
formulation, the null-space reduction, and the conditional-expectation
completion.

\subsection{The independent-decision references}

Independent stochastic rounding chooses the upper level with probability
$\theta_k$ and the lower level otherwise.  This makes each active scalar
error unbiased while preserving the adjacent-level feasible set.

\begin{lemma}[Same-grid stochastic-rounding reference]
\label{lem:disc-bernoulli}
For the feasible sets in \eqref{eq:disc-scalar-error}, independently draw
$\chi_k\sim\operatorname{Bernoulli}(\theta_k)$ for every $k\in U$ and keep
fixed coordinates at their forced values.  Then
$\E[d_k(\chi_k-\theta_k)]=0$ and
\begin{equation}
  \E\norm*{
    g^{\mathrm{fix}}+
    \sum_{k\in U}d_k(\chi_k-\theta_k)
      (W_{\mathrm{blk}})_{k,:}^\top
  }_2^2
  =
  \norm{g^{\mathrm{fix}}}_2^2+
  \sum_{k\in U}d_k^2\theta_k(1-\theta_k)
    \norm*{(W_{\mathrm{blk}})_{k,:}}_2^2 .
  \label{eq:disc-bernoulli-reference}
\end{equation}
For a common step $\Delta$ and offset fractions averaged uniformly over $[0,1]$,
the factor $\theta(1-\theta)$ averages to $1/6$.
\end{lemma}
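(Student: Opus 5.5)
The plan is a direct second-moment computation with a zero-mean, independent decomposition. First, for each active $k\in U$, $\chi_k\sim\operatorname{Bernoulli}(\theta_k)$ gives $\E[\chi_k]=\theta_k$, so $\E[d_k(\chi_k-\theta_k)]=d_k(\theta_k-\theta_k)=0$; this is the unbiasedness claim. Then I would set $\xi_k=d_k(\chi_k-\theta_k)$ and $w_k=(W_{\mathrm{blk}})_{k,:}^\top\in\R^p$, so the random product error is $g^{\mathrm{fix}}+\sum_{k\in U}\xi_k w_k$. Here $g^{\mathrm{fix}}$ and the $w_k$ are deterministic, because fixed coordinates keep their forced values.

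Next I would expand the squared norm as
\[
\norm{g^{\mathrm{fix}}}_2^2+2\sum_{k}\xi_k\,(g^{\mathrm{fix}})^\top w_k+\sum_{k,k'}\xi_k\xi_{k'}\,w_k^\top w_{k'}
\]
and take expectations term by term, which is legitimate since the sum is finite. The linear terms vanish by the zero mean. For $k\neq k'$, independence of the draws gives $\E[\xi_k\xi_{k'}]=\E[\xi_k]\E[\xi_{k'}]=0$, so the cross terms vanish. The diagonal terms use $\E[\xi_k^2]=d_k^2\operatorname{Var}(\chi_k)=d_k^2\theta_k(1-\theta_k)$. Multiplying by $\norm{w_k}_2^2=\norm{(W_{\mathrm{blk}})_{k,:}}_2^2$ yields \eqref{eq:disc-bernoulli-reference}.

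For the averaging remark, I would compute $\int_0^1\theta(1-\theta)\,d\theta=\tfrac12-\tfrac13=\tfrac16$. With a common step $\Delta$, the variance term then averages to $\Delta^2/6$ per coordinate, weighted by the squared row norms.

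No step is a real obstacle. The only care needed is bookkeeping: the orthogonality of cross terms depends on independence across coordinates together with zero mean, and $g^{\mathrm{fix}}$ must be kept non-random so that its cross term with the random part drops out.
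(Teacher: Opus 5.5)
Your proposal is correct and follows the paper's proof essentially line for line. Zero-mean increments remove the cross terms with $g^{\mathrm{fix}}$, independence removes the cross terms between distinct active coordinates, the diagonal terms contribute $d_k^2\theta_k(1-\theta_k)\norm{(W_{\mathrm{blk}})_{k,:}}_2^2$, and $\int_0^1\theta(1-\theta)\,d\theta=1/6$ gives the averaging remark.
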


\begin{proof}
Each active increment has mean zero and variance
$d_k^2\theta_k(1-\theta_k)$.  The zero means remove the cross terms with
$g^{\mathrm{fix}}$, and independence removes the cross terms between distinct
active increments.  Expanding the squared norm therefore gives
\eqref{eq:disc-bernoulli-reference}.  Finally,
$\int_0^1\theta(1-\theta)\,d\theta=1/6$.
\end{proof}

A second reference, \emph{subtractive dither}, supplies a scale independent
of the observed offset fractions.  It lets us compare the certificate
across inputs and block sizes.  Add an independent
$u_k\sim\mathcal U[-\Delta_k/2,\Delta_k/2]$ before uniform quantization,
then subtract the same $u_k$ from the reconstruction.  When the dithered
input stays within the quantizer's range, the resulting error is uniform
and independent of the input \cite{graystockham,grayneuhoff}.  The shifted
reconstruction can lie off the original grid, so same-grid Bernoulli
rounding remains the reference for admissible randomized choices.

\needspace{6\baselineskip}
\begin{lemma}[Unclipped subtractive-dither reference]
\label{lem:disc-dither}
Let $e=(e_1,\ldots,e_K)^\top$ contain independent quantization errors produced by
subtractive dither with steps $\Delta_1,\ldots,\Delta_K$, and assume that no
dithered value clips.  Then $e_k$ is uniform on $[-\Delta_k/2,\Delta_k/2]$,
independent of the input, and
\begin{equation}
  \E\!\norm*{e^\top W_{\mathrm{blk}}}_2^2
  =\frac{1}{12}\sum_{k=1}^K
    \Delta_k^2\norm*{(W_{\mathrm{blk}})_{k,:}}_2^2 .
  \label{eq:disc-dither-reference}
\end{equation}
For a common step $\Delta$, the right-hand side is
$(\Delta^2/12)\norm{W_{\mathrm{blk}}}_F^2$.
\end{lemma}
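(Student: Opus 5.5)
The argument has two parts: the classical Schuchman/Gray–Stockham distributional fact for one coordinate, then a second-moment expansion like the one in \Cref{lem:disc-bernoulli}.

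For a single coordinate, write the dithered quantizer as $Q(z+u_k)$ on a uniform grid of step $\Delta_k$, with reconstruction $Q(z+u_k)-u_k$. The error is then $e_k=Q(z+u_k)-(z+u_k)$. Because no dithered value clips, $Q$ acts as an unbounded uniform quantizer on the relevant range. So $e_k=-\big((z+u_k)\bmod\Delta_k\big)$ after recentering, i.e., $e_k$ is the centered residue of $z+u_k$ modulo $\Delta_k$. Since $u_k$ is uniform on an interval of length exactly one period $\Delta_k$, the residue $(z+u_k)\bmod\Delta_k$ is uniform on that period for every fixed $z$. Hence the conditional law of $e_k$ given the input is $\mathcal U[-\Delta_k/2,\Delta_k/2]$, which does not depend on $z$. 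This gives both uniformity and independence from the input.

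Independence across coordinates follows because the $u_k$ are independent. Given the input vector, each $e_k$ is a function of $u_k$ alone, so the $e_k$ are conditionally independent. Each conditional law is the fixed uniform law, so they are also unconditionally independent, with mean $0$ and variance $\Delta_k^2/12$.

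For the moment identity, write $e^\top W_{\mathrm{blk}}=\sum_k e_k (W_{\mathrm{blk}})_{k,:}$ and expand the squared norm. Cross terms $\E[e_ke_j]\langle (W_{\mathrm{blk}})_{k,:},(W_{\mathrm{blk}})_{j,:}\rangle$ with $k\ne j$ vanish by independence and zero means. The diagonal terms contribute $\E[e_k^2]\norm{(W_{\mathrm{blk}})_{k,:}}_2^2=(\Delta_k^2/12)\norm{(W_{\mathrm{blk}})_{k,:}}_2^2$, which is \eqref{eq:disc-dither-reference}. With a common step, summing the squared row norms gives $\norm{W_{\mathrm{blk}}}_F^2$.

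The main obstacle is the first step: establishing that the error's law is exactly uniform regardless of the input, with some care about boundary conventions. Tie-breaking at grid midpoints happens on a measure-zero set of $u_k$, so it does not matter. The no-clipping hypothesis is exactly what lets us replace the clipped quantizer with the periodic one. If the input is itself random, I would condition on it and apply the fixed-$z$ result, then integrate.
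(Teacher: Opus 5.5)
Your proposal is correct and follows the paper's route. Both arguments establish a per-coordinate fact, that $e_k$ has mean zero and second moment $\Delta_k^2/12$, then expand $\norm{e^\top W_{\mathrm{blk}}}_2^2$ and remove the cross terms using independence and zero means. The paper simply cites Gray--Stockham for the uniform, input-independent error law, whereas you derive it directly from the fact that the dither spans exactly one period $\Delta_k$, which is a valid, self-contained justification when every dithered value stays in range.
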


\begin{proof}
Unclipped subtractive dither gives
$\E e_k=0$ and $\E e_k^2=\Delta_k^2/12$
\cite{graystockham,grayneuhoff}.  After the squared norm is expanded,
independence and zero means remove all cross terms, leaving
\[
  \sum_k \E e_k^2
  \norm*{(W_{\mathrm{blk}})_{k,:}}_2^2 .
\]
Substitution yields \eqref{eq:disc-dither-reference}.
\end{proof}

\paragraph{How to read the comparisons.}
The binary optimum is the smallest squared error over admissible choices;
a method's realized error is what its chosen pattern incurs.  Our
certificate bounds the latter relative to the former
(\Cref{thm:disc-additive}).  Same-grid stochastic rounding averages over
admissible patterns, so its expectation is at least the binary optimum.
Dither uses shifted reconstructions and has no such ordering
(\Cref{fig:disc-quantities}).  In the running example, the optimum is
$0.1^2=0.01$, whereas the dither expectation is $4/12$.
Coordination can therefore beat this noise scale.  With clipping, we keep
the forced offset in the objective and use dither only as a normalization;
\Cref{lem:disc-dither} describes the unclipped noise law.

\begin{figure}[!tbp]
  \centering
  \begin{tikzpicture}[
    qty/.style={draw, rounded corners, align=center, inner sep=5pt,
      minimum height=0.85cm},
    flow/.style={->, line width=0.6pt},
    nolink/.style={line width=0.6pt, dashed}
  ]
    \node[qty] (opt) at (-0.4,0.95)
      {binary optimum $\mathrm{OPT}$};
    \node[qty] (alg) at (-0.4,-0.95)
      {realized squared error};
    \node[qty] (ref) at (6.8,0.95)
      {dither reference expectation};
    \draw[flow] (alg) -- node[left, align=right, font=\small]
      {additive guarantee} (opt);
    \draw[nolink] (opt) -- node[above, font=\small]
      {no ordering} (ref);
  \end{tikzpicture}
  \caption{The three comparison quantities.  The constructive guarantee of
  \Cref{thm:disc-additive} is additive relative to the binary optimum.
  Same-grid stochastic rounding has expectation at least the binary optimum;
  the dither expectation is unordered with both
  (\Cref{sec:disc-setting}).}
  \label{fig:disc-quantities}
\end{figure}

\subsection{Notation}

\begin{table}[H]
  \centering
  \small
  \renewcommand{\arraystretch}{1.08}
  \begin{tabular}{@{}p{0.19\linewidth}p{0.74\linewidth}@{}}
    \toprule
    Symbol & Meaning \\
    \midrule
    $K,p,n$ & Shared inner dimension, dynamic block width, and number of
      columns in the reusable static factor. \\
    $\ell,u,d_k,\Delta$ & Adjacent lower and upper levels, their
      coordinate-dependent step size, and a common step when one is assumed. \\
    $\theta,\chi$ & Observed offset fractions and binary lower/upper
      choices. \\
    $U,F,D$ & Active and fixed coordinate sets, and the embedding of active
      step sizes into the full inner dimension. \\
    $e^{\mathrm{fix}},g^{\mathrm{fix}}$ & Errors forced by saturation or exact
      representation, and their image through the weight block. \\
    $W_{\mathrm{blk}},V,v_k$ & Weight block, its step-weighted version with
      rows $v_k^\top$, where $v_k=d_k(W_{\mathrm{blk}})_{k,:}^\top$. \\
    $r,R,K_{\mathrm{eff}}$ & Rank of $V$, its largest row norm
      $R=\max_k\norm{v_k}_2$, and effective row count
      $K_{\mathrm{eff}}=\norm{V}_F^2/R^2$ for $V\ne0$. \\
    \bottomrule
  \end{tabular}
  \caption{Notation for the fixed-grid setup and the dynamic results;
    symbols for the static and hardness sections are defined where they
    first appear.}
  \label{tab:disc-notation}
\end{table}

We now coordinate these choices for one observed activation row and one
weight block.

\section{Dynamic product-aware rounding}
\label{sec:disc-dynamic}

Rounding errors can cancel after multiplication by a weight block.  A dynamic
rounder chooses the up/down decisions together to exploit that cancellation.
We express their combined error as a binary least-squares objective, then
use low rank to settle all but a few decisions without changing the output.

\subsection{The binary least-squares objective}
\label{sec:disc-binary-objective}

Choosing the upper level instead of the lower one changes the product by
the corresponding weight row times the grid step size.  We aim to balance these row
contributions.  Fix one observed activation row and
a weight block
$W_{\mathrm{blk}}\in\R^{K\times p}$.  Use the affine representation
\eqref{eq:disc-affine-error} for the row's rounding error, and define
\[
  g^{\mathrm{fix}}=W_{\mathrm{blk}}^\top e^{\mathrm{fix}}
  \quad\text{and}\quad
  v_k=d_k(W_{\mathrm{blk}})_{k,:}^\top\in\R^p,\qquad k\in U.
\]
Let $V\in\R^{|U|\times p}$ have rows $v_k^\top$.  The vector $g^{\mathrm{fix}}$
includes every error forced by saturation; exact grid values contribute
zero.  For binary choices $\chi\in\{0,1\}^{|U|}$, the squared product error
is
\begin{equation}
  \mathcal L(\chi)
  =
  \norm*{
    g^{\mathrm{fix}}+
    \sum_{k\in U}(\chi_k-\theta_k)v_k
  }_2^2 ,
  \label{eq:disc-objective}
\end{equation}
and the binary optimum is
\begin{equation}
  \mathrm{OPT}_{\mathrm{dyn}}(V,\theta,g^{\mathrm{fix}})
  =
  \min_{\chi\in\{0,1\}^{|U|}}\mathcal L(\chi).
  \label{eq:disc-optdyn}
\end{equation}

The affine representation \eqref{eq:disc-affine-error} gives
$W_{\mathrm{blk}}^\top e=g^{\mathrm{fix}}+\sum_{k\in U}(\chi_k-\theta_k)v_k$, the transpose of the output-error row $e^\top W_{\mathrm{blk}}$.
Each binary vector specifies one admissible rounding pattern, so
\eqref{eq:disc-optdyn} is exactly the smallest error on these fixed grids.

In the running example from \Cref{sec:disc-setting}, all four rows of $V$
equal the scalar one, so $r=\rank(V)=1$ and
\[
  \mathrm{OPT}_{\mathrm{dyn}}(V,\theta,0)
  =\min_{\chi\in\{0,1\}^4}
    \left(\sum_{k=1}^4\chi_k-1.9\right)^2
  =0.01.
\]
Every binary vector with two upper-rounding decisions attains this optimum.

For a matrix of activation rows, the squared Frobenius product error for a
fixed output block is the sum of the row losses
\eqref{eq:disc-objective}, so the constructions below apply independently
to each row.  Different output blocks may require different rounded copies of the same
row; \Cref{sec:disc-discussion} quantifies the cost of those copies.

\subsection{A deterministic additive approximation}
\label{sec:disc-ce}

Whenever more than $r=\rank(V)$ variables remain fractional, their row
contributions are linearly dependent.  We can move those variables together
until another reaches an endpoint, while their output changes cancel.
That is how low rank reduces the number of unresolved decisions.

First allow each decision to range continuously from zero to one.  Values
strictly between the endpoints are \emph{fractional}.  Write the product
image as $\pi(z)=V^\top z$ and solve the convex quadratic program
\begin{equation}
  x^0\in\operatorname*{arg\,min}_{z\in[0,1]^{|U|}}
  \bigl\|g^{\mathrm{fix}}+V^\top(z-\theta)\bigr\|_2^2.
  \label{eq:disc-relaxation}
\end{equation}
Geometrically, this projects the target
$V^\top\theta-g^{\mathrm{fix}}$ onto the image of the cube under $\pi$.
The minimum is zero exactly when that image contains the target.
The reduction below moves along $\ker V^\top$, so it preserves $\pi(z)$
and the objective attained by the continuous solution.

\begin{lemma}[Null-space reduction to at most $r$ fractional coordinates]
\label{lem:disc-null-walk}
Let $V\in\R^{|U|\times p}$ have rows $v_k^\top$ and rank $r$.  There is a
deterministic polynomial-time procedure that starts at any
$x^{\mathrm{init}}\in[0,1]^{|U|}$, preserves
$V^\top x=V^\top x^{\mathrm{init}}$, and terminates at
$x^\star\in[0,1]^{|U|}$ with at most $r$ fractional coordinates.
\end{lemma}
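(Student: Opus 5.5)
We follow the standard iterative-rounding (Beck--Fiala style) null-space walk.  Maintain a current point $x\in[0,1]^{|U|}$, initialized at $x^{\mathrm{init}}$, and let $S(x)=\{k: 0<x_k<1\}$ be its fractional set.  The invariant is $V^\top x=V^\top x^{\mathrm{init}}$, and in addition the set $S(x)$ only shrinks: coordinates that have reached $0$ or $1$ are frozen and never changed again.  The procedure stops as soon as $|S(x)|\le r$.

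\textbf{One step.}  Suppose $|S|>r$, and let $V_S$ be the submatrix of $V$ with rows $v_k^\top$, $k\in S$.  Since $\rank V_S\le\rank V=r<|S|$, the $|S|$ rows of $V_S$ are linearly dependent.  Hence there is a nonzero $y\in\R^{|U|}$, supported on $S$, with $V^\top y=\sum_{k\in S}y_kv_k=0$.  We compute such a $y$ by Gaussian elimination on the $p\times|S|$ matrix $V_S^\top$.  Along the line $x+ty$, the image $V^\top(x+ty)=V^\top x$ is unchanged, and coordinates outside $S$ do not move.

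Because every $x_k$ with $k\in S$ lies strictly inside $(0,1)$ and $y\neq0$ on $S$, the quantity
\[
t^\star=\max\{t>0: x+ty\in[0,1]^{|U|}\}
\]
is finite and positive.  It equals the minimum over $k\in S$ with $y_k\ne0$ of $(1-x_k)/y_k$ when $y_k>0$, and of $x_k/(-y_k)$ when $y_k<0$.  At $t=t^\star$ at least one coordinate of $S$ attains $0$ or $1$, and every other coordinate stays in $[0,1]$.  Set $x\leftarrow x+t^\star y$.  The new point is feasible, preserves the image, and has a fractional set strictly smaller than before, since no integral coordinate became fractional.

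\textbf{Termination and complexity.}  Each iteration removes at least one index from $S$, so there are at most $|U|-r$ iterations.  Each iteration is one null-space computation plus a ratio test, polynomial in $|U|$ and $p$.  Upon exit $|S(x^\star)|\le r$ and $V^\top x^\star=V^\top x^{\mathrm{init}}$.  The procedure is deterministic once we fix a deterministic rule for choosing $y$, for example the first null vector produced by elimination with a fixed pivot order.

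\textbf{Main obstacle.}  The only delicate point is exactness.  In exact (rational) arithmetic, polynomiality needs the bit sizes to stay bounded across iterations.  Each $x^\star$ is a vertex-like point determined by the linear system $V^\top x=V^\top x^{\mathrm{init}}$ together with fixed coordinates, so Cramer's-rule bounds control its encoding length.  In floating point, an endpoint tolerance is needed instead.  We would state the lemma in the exact-arithmetic model and note that the combinatorial argument (dependence when $|S|>r$, strict decrease of $|S|$) is the heart of the proof.
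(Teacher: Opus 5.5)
Your proposal is correct and is essentially the paper's proof: while more than $r$ coordinates are fractional, their rows are dependent, so a null direction supported on the fractional set (found by Gaussian elimination) can be followed to the cube boundary, preserving $V^\top x$ and strictly shrinking the fractional set. Your bit-size remark goes beyond the paper, but the Cramer's-rule justification does not apply as stated: iterates whose fractional rows are dependent are not uniquely determined by that linear system. The size bound instead follows because each step multiplies a common denominator of $x$ by one entry of an integral null vector whose encoding size is bounded in terms of $V$ alone, and there are at most $|U|$ steps.
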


\begin{proof}
Let $J$ be the fractional coordinates of the current point $x$.  While
$|J|>r$, the columns of $V_J^\top$ are linearly dependent.  Choose a nonzero
$h$ supported on $J$ with $V_J^\top h=0$, and move from $x$ in either feasible
direction along $h$ until one coordinate reaches $0$ or $1$.  The move
preserves $V^\top x$, keeps $x$ in the cube, and strictly shrinks $J$.
The procedure therefore terminates with a set $S$ satisfying $|S|\le r$ and
$V^\top x^\star=V^\top x^{\mathrm{init}}$.  Gaussian elimination supplies
each null direction in polynomial time.
\end{proof}

At the endpoint, fixing every integral coordinate leaves a face of the cube
whose $|S|$ free coordinates are the unresolved decisions.  We call it the
\emph{reached face}.  Its $2^{|S|}$ vertices can be enumerated to find the best
completion on that face; finding the best vertex of the full cube would
require searching across faces too.  For the running example, one endpoint is
$x^\star=(0,0,1,0.9)^\top$.  It preserves
$(1,1,1,1)\cdot x^\star=(1,1,1,1)\cdot\theta=1.9$ and leaves only the
fourth coordinate fractional.  Its reached face is therefore the cube edge
obtained by fixing the first three coordinates to $(0,0,1)$.
\Cref{fig:disc-null-space-geometry} illustrates the same reduction in a
three-coordinate rank-one example.

\begin{figure}[p]
  \centering
  \includegraphics[width=\linewidth]{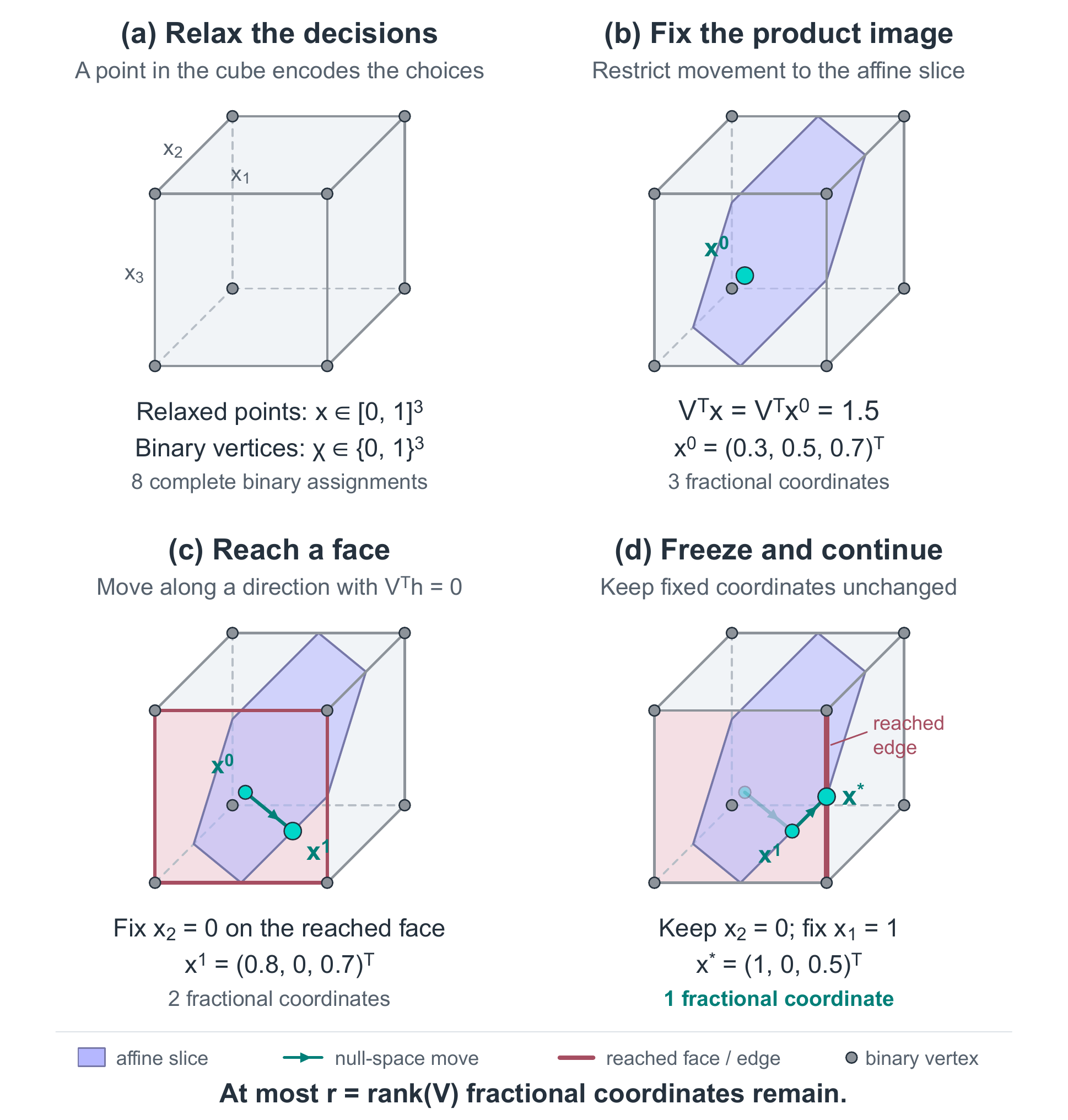}
  \caption{Null-space reduction preserves the relaxed product.  This
  three-coordinate example uses $V=(1,1,1)^\top$, so the affine slice is
  $x_1+x_2+x_3=1.5$.  Starting at $x^0=(0.3,0.5,0.7)^\top$, the first move
  reaches $x^1=(0.8,0,0.7)^\top$ and fixes $x_2=0$ on the highlighted face.
  The second move keeps $x_2=0$ and fixes $x_1=1$, reaching
  $x^\star=(1,0,0.5)^\top$ on the highlighted edge.  Both moves preserve
  $V^\top x$, and the endpoint has one fractional coordinate, matching
  $\rank(V)=1$.}
  \label{fig:disc-null-space-geometry}
\end{figure}

Only the fractional coordinates can now add rounding error.  To finish,
imagine rounding them independently, with $x_k^\star$ as the probability of
choosing the upper level.  Try both choices for the first coordinate and
keep the one with the smaller expected final error.  Leave the later choices
random during this comparison.  At least one choice is no worse than the
weighted average, so repeating the comparison fixes every coordinate without
increasing the initial expectation.  This is the \emph{method of conditional
expectations}.

\Needspace{11\baselineskip}
\begin{lemma}[Conditional-expectation completion]
\label{lem:disc-ce}
Let $x^\star\in[0,1]^{|U|}$ have fractional set $S$.  A deterministic
procedure, running in $O((|U|+|S|)p)$ arithmetic operations, returns
$\chi\in\{0,1\}^{|U|}$ that agrees with $x^\star$ off $S$ and satisfies
\begin{equation}
  \mathcal L(\chi)
  \le
  \mathcal L(x^\star)+
  \sum_{k\in S}x_k^\star(1-x_k^\star)\norm{v_k}_2^2
  \le
  \mathcal L(x^\star)+\frac14\sum_{k\in S}\norm{v_k}_2^2 .
  \label{eq:disc-ce-general}
\end{equation}
\end{lemma}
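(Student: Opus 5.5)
The plan is to use the method of conditional expectations. Define the random binary vector $\xi$ by setting $\xi_k=x^\star_k$ for $k\notin S$ and drawing $\xi_k\sim\operatorname{Bernoulli}(x^\star_k)$ independently for $k\in S$. Write
\[
  c=g^{\mathrm{fix}}+V^\top(x^\star-\theta),
\]
so that $\mathcal L(x^\star)=\norm{c}_2^2$. Then the random error vector is
\[
  g^{\mathrm{fix}}+V^\top(\xi-\theta)=c+\sum_{k\in S}(\xi_k-x^\star_k)v_k .
\]
Exactly as in the proof of \Cref{lem:disc-bernoulli}, the increments $(\xi_k-x^\star_k)v_k$ have mean zero and are independent. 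Expanding the square removes all cross terms and gives
\[
  \E\mathcal L(\xi)=\mathcal L(x^\star)+\sum_{k\in S}x^\star_k(1-x^\star_k)\norm{v_k}_2^2 .
\]
The second inequality in \eqref{eq:disc-ce-general} then follows from $t(1-t)\le 1/4$.

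Next, derandomize one coordinate of $S$ at a time. Order $S=\{s_1,\dots,s_m\}$ and keep a current partial vector $y$, with $y_{s_i}\in\{0,1\}$ for the indices already processed and $y_{s_i}=x^\star_{s_i}$ for the rest. Define the current residual $c_y=g^{\mathrm{fix}}+V^\top(y-\theta)$. By the same expansion, the conditional expectation given the fixed choices is
\[
  \Phi(y)=\norm{c_y}_2^2+\sum_{k\in S\text{ unfixed}}y_k(1-y_k)\norm{v_k}_2^2 .
\]
When processing $s=s_i$, the two candidate values of $\Phi$ are weighted by $x^\star_s$ and $1-x^\star_s$ and average to the current $\Phi$. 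Keeping the smaller candidate therefore never increases $\Phi$. After $m$ steps every coordinate is integral, the variance term vanishes, and $\Phi$ equals $\mathcal L(\chi)$, which is at most the initial $\Phi(x^\star)$, the right side of \eqref{eq:disc-ce-general}. Coordinates off $S$ are never changed, so $\chi$ agrees with $x^\star$ off $S$.

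The one step needing care is the operation count. The two candidates differ from $\Phi(y)$ only through the residual shift $c_y\mapsto c_y+(b-x^\star_s)v_s$ for $b\in\{0,1\}$ and the removal of the term $x^\star_s(1-x^\star_s)\norm{v_s}_2^2$. The variance terms for the remaining coordinates are common to both candidates and can be ignored in the comparison. I would therefore:
\begin{enumerate}
\item compute the initial residual $c=g^{\mathrm{fix}}+V^\top(x^\star-\theta)$ once in $O(|U|p)$ operations;
\item at each step, compare $\norm{c_y+(1-x^\star_s)v_s}_2^2$ with $\norm{c_y-x^\star_s v_s}_2^2$ in $O(p)$, for instance via $\langle c_y,v_s\rangle$ and $\norm{v_s}_2^2$;
\item update $c_y$ in $O(p)$.
\end{enumerate}
This gives $O((|U|+|S|)p)$ operations in total. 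The main subtlety I expect is checking that the expectation formula holds exactly conditionally at every stage, which requires independence of the remaining Bernoulli draws. This follows immediately from the product-measure construction.
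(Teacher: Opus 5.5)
Your proposal is correct and follows essentially the paper's own proof. It rounds the fractional coordinates by independent Bernoulli draws around the residual $g^{\mathrm{fix}}+V^\top(x^\star-\theta)$, uses the variance expansion, and then derandomizes coordinate by coordinate with a running residual, at $O(p)$ per step; you only make the potential $\Phi$ and its averaging identity explicit where the paper leaves them implicit.
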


\begin{proof}
Write
$g'=g^{\mathrm{fix}}+V^\top(x^\star-\theta)$.  If each $k\in S$ is rounded
independently with $\Pr(\chi_k=1)=x_k^\star$, then the increments
$\chi_k-x_k^\star$ are independent and have mean zero.  Expanding the square
therefore gives
\[
  \E\norm*{g'+V_S^\top(\chi_S-x_S^\star)}_2^2
  =
  \norm{g'}_2^2+
  \sum_{k\in S}x_k^\star(1-x_k^\star)\norm{v_k}_2^2 .
\]
Make the choices deterministic by fixing the coordinates of $S$ one at a time
in a fixed order.  At coordinate $k$, evaluate the two conditional
expectations obtained from $\chi_k=0$ and $\chi_k=1$, leaving later
coordinates random, and choose the smaller.  A running product-error sum
evaluates each alternative in $O(p)$ operations; the remaining variance
terms are common to both alternatives.  No choice increases the conditional
expectation, proving the first inequality in \eqref{eq:disc-ce-general}.
The bound $x(1-x)\le1/4$ proves the second.
\end{proof}

At the running-example endpoint $x^\star=(0,0,1,0.9)^\top$, the two choices
for the last coordinate give squared errors $(-0.9)^2=0.81$ and
$(0.1)^2=0.01$.  Conditional expectation selects the upper endpoint and
therefore attains the binary optimum $0.01$ computed above.

The two lemmas give the certificate.  The reduction leaves at most $r$
variance terms in \eqref{eq:disc-ce-general}; each is at most one quarter of
the largest squared row norm.  Define $R=\max_{k\in U}\|v_k\|_2$, with
$R=0$ when $U$ is empty.

\needspace{8\baselineskip}
\begin{theorem}[Dynamic additive approximation]
\label{thm:disc-additive}
Solve \eqref{eq:disc-relaxation}, apply the reduction of
\Cref{lem:disc-null-walk}, and complete the remaining coordinates by
\Cref{lem:disc-ce}.  The returned binary point satisfies
\begin{equation}
  \mathcal L(\chi)
  \le \mathcal L(x^0)+\frac14rR^2
  \le \mathrm{OPT}_{\mathrm{dyn}}(V,\theta,g^{\mathrm{fix}})
       +\frac14rR^2.
  \label{eq:disc-additive}
\end{equation}
\end{theorem}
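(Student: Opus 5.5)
The plan is to chain the two lemmas and then compare the relaxation with the binary optimum.

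First, let $x^0$ solve \eqref{eq:disc-relaxation}. I read $\mathcal L$ as extended to $[0,1]^{|U|}$ by the same formula \eqref{eq:disc-objective}. Apply \Cref{lem:disc-null-walk} with $x^{\mathrm{init}}=x^0$. It returns $x^\star\in[0,1]^{|U|}$ with $V^\top x^\star=V^\top x^0$ and fractional set $S$ with $|S|\le r$. The objective depends on $z$ only through $g^{\mathrm{fix}}+V^\top(z-\theta)$. Hence $\mathcal L(x^\star)=\mathcal L(x^0)$ exactly, and $x^\star$ is also a minimizer of the relaxation.

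Next, apply \Cref{lem:disc-ce} to $x^\star$. The returned binary $\chi$ satisfies
\[
\mathcal L(\chi)\le \mathcal L(x^\star)+\tfrac14\sum_{k\in S}\norm{v_k}_2^2 .
\]
Each term satisfies $\norm{v_k}_2^2\le R^2$ by the definition of $R$, and there are $|S|\le r$ terms. Therefore
\[
\mathcal L(\chi)\le \mathcal L(x^0)+\tfrac14 rR^2,
\]
which is the first inequality in \eqref{eq:disc-additive}. The degenerate cases need a brief remark. If $U=\emptyset$, then $R=0$, nothing is fractional, and the inequality is an equality. If $V=0$, then $r=0$ and $S=\emptyset$, so again there is nothing to bound.

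For the second inequality, note that $\{0,1\}^{|U|}\subseteq[0,1]^{|U|}$. Every binary vector is feasible for \eqref{eq:disc-relaxation}, so the relaxed minimum satisfies $\mathcal L(x^0)\le\min_{\chi\in\{0,1\}^{|U|}}\mathcal L(\chi)=\mathrm{OPT}_{\mathrm{dyn}}$ by \eqref{eq:disc-optdyn}.

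Finally, I would check polynomial running time. The relaxation is a convex QP over a box, and the other two steps are polynomial by the lemmas.

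The main subtlety is that the relaxation is solved only numerically in practice. Exact polynomial-time solvability of the QP, for example by the ellipsoid method or by interior-point methods with rounding to an exact optimum, must be invoked for the clean statement. Alternatively, the bound holds with $\mathcal L(x^0)$ replaced by the value actually attained, and the second inequality then holds up to the solver's additive tolerance. Apart from this point, every step is a direct application of the earlier lemmas.
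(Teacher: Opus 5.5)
Your proposal is correct and follows the paper's proof: the null-space reduction preserves $V^\top x$, so $\mathcal L(x^\star)=\mathcal L(x^0)$; the completion bound with $|S|\le r$ and $\norm{v_k}_2\le R$ gives the first inequality; and $\{0,1\}^{|U|}\subseteq[0,1]^{|U|}$ gives the second. Your remark that an approximately solved relaxation only weakens the bound by the solver tolerance matches the paper's note that an $\eta$-accurate solve adds $\eta$.
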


\begin{proof}
The reduction preserves the product image, so
$\mathcal L(x^\star)=\mathcal L(x^0)$.
Apply \eqref{eq:disc-ce-general} and use $|S|\le r$,
$x_k^\star(1-x_k^\star)\le1/4$, and $\|v_k\|_2\le R$.
Finally, the cube contains every binary feasible point, so its minimum is
at most $\mathrm{OPT}_{\mathrm{dyn}}$.
\end{proof}

An $\eta$-accurate solution of the convex quadratic program adds $\eta$ to
this guarantee.  It can be computed in time polynomial in the encoded input
size and $\log(1/\eta)$ \cite{kozlov1980convex}; Gaussian elimination supplies
the null directions in polynomial time.  Appendix~\ref{sec:disc-algorithms} gives the
procedures and a bound for measured floating-point drift.

\paragraph{Why the starting point matters under clipping.}
Without clipping, the observed fractional target $\theta$ already attains
zero continuous loss, so it can replace $x^0$.  With clipping, starting at
$\theta$ preserves $\pi(\theta)$ throughout the reduction.  At that endpoint the active
coordinates contribute zero net correction to the forced error
$g^{\mathrm{fix}}$.  Optimizing the continuous start instead lets them
compensate that error before any decisions are finalized.

\begin{corollary}[Target start]
\label{cor:disc-theta-start}
Start the reduction instead at $\theta$.  Then $|S|\le r$,
$\mathcal L(x^\star)=\norm{g^{\mathrm{fix}}}_2^2$, and
\begin{equation}
  \mathcal L(\chi)
  \le
  \norm{g^{\mathrm{fix}}}_2^2
  +\frac14rR^2 .
  \label{eq:disc-ce-rank}
\end{equation}
\end{corollary}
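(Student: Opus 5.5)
The plan is to run the same three-step pipeline as in \Cref{thm:disc-additive}, replacing the optimized start $x^0$ by $x^{\mathrm{init}}=\theta$. The only new work is to evaluate the continuous loss at this start. Since $\theta\in(0,1)^{|U|}\subset[0,1]^{|U|}$, it is an admissible starting point for \Cref{lem:disc-null-walk}.

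\textbf{Step 1: apply the null-space reduction.} Invoking \Cref{lem:disc-null-walk} with $x^{\mathrm{init}}=\theta$, I obtain $x^\star\in[0,1]^{|U|}$ with fractional set $S$, where $|S|\le r=\rank(V)$ and $V^\top x^\star=V^\top\theta$.

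\textbf{Step 2: evaluate the continuous loss.} The loss $\mathcal L$, extended to the cube by the formula in \eqref{eq:disc-relaxation}, depends on $z$ only through $V^\top(z-\theta)$. At the endpoint this quantity vanishes, so
\[
\mathcal L(x^\star)=\norm*{g^{\mathrm{fix}}+V^\top(x^\star-\theta)}_2^2=\norm{g^{\mathrm{fix}}}_2^2 .
\]

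\textbf{Step 3: complete the fractional coordinates.} Applying \Cref{lem:disc-ce} to $x^\star$ gives
\[
\mathcal L(\chi)\le\norm{g^{\mathrm{fix}}}_2^2+\tfrac14\sum_{k\in S}\norm{v_k}_2^2 .
\]
Each summand is at most $R^2$ by the definition of $R$, and there are at most $r$ of them since $|S|\le r$. This yields \eqref{eq:disc-ce-rank}.

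There is no substantive obstacle; the argument is a specialization of the proof of \Cref{thm:disc-additive}. The one point that needs care is that the invariance argument uses the product image $V^\top x$ rather than optimality of the start, so it does not require $\theta$ to solve \eqref{eq:disc-relaxation}. I would also treat the degenerate case $U=\emptyset$ separately: then $R=0$, $r=0$, $\chi$ is empty, and $\mathcal L(\chi)=\norm{g^{\mathrm{fix}}}_2^2$ trivially.
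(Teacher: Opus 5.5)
Your proposal is correct and follows the paper's own proof: invariance of $V^\top x$ under the null-space reduction gives $V^\top(x^\star-\theta)=0$, hence $\mathcal L(x^\star)=\norm{g^{\mathrm{fix}}}_2^2$. Substituting this into \eqref{eq:disc-ce-general} with $|S|\le r$ and $\norm{v_k}_2\le R$ then yields \eqref{eq:disc-ce-rank}.
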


\begin{proof}
Invariance gives $V^\top(x^\star-\theta)=0$, so
$\mathcal L(x^\star)=\norm{g^{\mathrm{fix}}}_2^2$.  Substitute into
\eqref{eq:disc-ce-general} with $|S|\le r$ and
$\norm{v_k}_2\le R$.
\end{proof}

The clipping experiment in \Cref{sec:disc-exp-affine} compares these two
starts while keeping the reduction and completion procedures fixed.

\subsection{Comparing the certificate with independent-rounding scales}
\label{sec:disc-dynamic-comparison}

Low rank controls how many decisions remain; the largest row norm controls
their cost.  To compare the resulting certificate with dither, suppose all
$K$ coordinates are active, with common step size $\Delta$ and zero fixed
offset.  Then $V=\Delta W_{\mathrm{blk}}$.  For $V\ne0$, define the
\emph{effective row count}
\begin{equation}
  K_{\mathrm{eff}}
  =
  \frac{\norm{V}_F^2}{R^2},
  \qquad 1\le K_{\mathrm{eff}}\le K .
  \label{eq:disc-effective-rows}
\end{equation}
This count divides total row energy by the largest row energy.  It can
exceed rank: $K$ identical nonzero rows have rank one and
$K_{\mathrm{eff}}=K$.  A dominant row brings the count close to one.
Writing $\beta_{\mathrm{rms}}$ and $\beta_{\max}$ for the root-mean-square
and largest row norms of $W_{\mathrm{blk}}$ gives
$K_{\mathrm{eff}}=K(\beta_{\mathrm{rms}}/\beta_{\max})^2$.
The related participation ratio is
$(\sum_k\|v_k\|_2^2)^2/\sum_k\|v_k\|_2^4$; the maximum in
\eqref{eq:disc-effective-rows} is the normalization required by the certificate.
The dither reference energy \eqref{eq:disc-dither-reference} and the
certified conditional-expectation bound from \eqref{eq:disc-ce-rank} are
\[
  D_{\mathrm{dither}}
  =\frac{1}{12}\norm{V}_F^2,
  \qquad
  B_{\mathrm{CE}}
  =\frac14rR^2,
\]
so their ratio is
\begin{equation}
  \frac{B_{\mathrm{CE}}}{D_{\mathrm{dither}}}
  =
  \frac{3r}{K_{\mathrm{eff}}} .
  \label{eq:disc-conditional-factor}
\end{equation}
Thus, whenever row energy is spread across more than $3r$ effective rows, the
certificate itself lies below the dither reference: coordinated rounding is
guaranteed to beat independent noise of that scale.  The certificate is
worst-case, while the dither reference is an expectation over independent
noise.

The condition is sufficient, not necessary.  Take two identical scalar rows
$v_1=v_2=1$ and $\theta=(1/2,1/2)^\top$.  Then $K_{\mathrm{eff}}=2<3r$, yet
one upper and one lower choice give zero error, compared with
$D_{\mathrm{dither}}=1/6$.  The experiments measure how far the realized
error lies below the certificate
(\Cref{sec:disc-exp-dynamic,sec:disc-exp-imbalance}).

For the same-grid stochastic reference
\eqref{eq:disc-bernoulli-reference}, the denominator is instead
\[
  \Delta^2\sum_k\theta_k(1-\theta_k)
  \norm*{(W_{\mathrm{blk}})_{k,:}}_2^2 .
\]
This reference depends on the observed offset fractions $\theta$, whereas
dither provides an input-independent scale for the threshold above.

\paragraph{An additive scheme by enumerating heavy rows.}
The maximum row norm in the certificate suggests a direct refinement:
fix the few largest rows by enumeration, then apply the certificate to the
remaining rows.  This trades running time for accuracy.

\begin{corollary}[Heavy-row enumeration]
\label{cor:disc-heavy-enumeration}
For $V\ne0$, let $r=\rank(V)$ and
$\mathcal D(V)=\|V\|_F^2/12$.  For every $\varepsilon>0$ and $\eta\in(0,1)$, enumeration
followed by the dynamic construction returns a binary point with
\[
  \mathcal L(\chi)
  \le \mathrm{OPT}_{\mathrm{dyn}}(V,\theta,g^{\mathrm{fix}})
      +\varepsilon\mathcal D(V)+\eta
\]
in $2^{\lceil3r/\varepsilon\rceil}$ times polynomial time in the encoded
input size and $\log(1/\eta)$.  The target and fixed offset are arbitrary.
\end{corollary}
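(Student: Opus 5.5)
Let $m=\lceil 3r/\varepsilon\rceil$ and let $H\subseteq U$ be the indices of the $\min(m,|U|)$ rows of $V$ with the largest norms $\norm{v_k}_2$. The plan is to enumerate all $2^{|H|}\le 2^m$ binary assignments $\chi_H$. Each assignment moves the heavy rows into the fixed part, giving the offset
\[
  g^{\mathrm{fix}}_{\chi_H}=g^{\mathrm{fix}}+\sum_{k\in H}(\chi_k-\theta_k)v_k .
\]
On the residual instance $(V_{U\setminus H},\theta_{U\setminus H},g^{\mathrm{fix}}_{\chi_H})$ I run the construction of \Cref{thm:disc-additive}, solving the relaxation only to accuracy $\eta$. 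Among the $2^{|H|}$ resulting binary points I return the one with the smallest $\mathcal L$. Each branch costs polynomial time in the input size and $\log(1/\eta)$, by the remark after \Cref{thm:disc-additive}, so the running-time claim follows immediately.

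For the error bound, fix an optimal binary $\chi^{\mathrm{opt}}$ and look at the branch with $\chi_H=\chi^{\mathrm{opt}}_H$. The residual binary optimum in that branch is at most $\mathrm{OPT}_{\mathrm{dyn}}(V,\theta,g^{\mathrm{fix}})$, because $\chi^{\mathrm{opt}}_{U\setminus H}$ is feasible there. The residual matrix $V'=V_{U\setminus H}$ has rank $r'\le r$, since it is a row submatrix of $V$. Its largest row norm $R'$ is the $(|H|+1)$-th largest row norm of $V$. Applying \Cref{thm:disc-additive} with the $\eta$ slack gives
\[
  \mathcal L(\chi)\le \mathrm{OPT}_{\mathrm{dyn}}+\tfrac14 rR'^2+\eta .
\]
If $|H|=|U|$, the residual instance is empty and the branch is exact.

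The key step is bounding $R'^2$. Each of the $m$ heavy rows has squared norm at least $R'^2$, so $mR'^2\le\sum_{k\in H}\norm{v_k}_2^2\le\norm{V}_F^2$. Hence
\[
  \tfrac14 rR'^2\le \frac{r\norm{V}_F^2}{4m}\le\frac{\varepsilon\norm{V}_F^2}{12}=\varepsilon\mathcal D(V),
\]
where the second inequality uses $m\ge 3r/\varepsilon$. This is exactly the normalization in \eqref{eq:disc-conditional-factor}, applied to the tail rows.

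The main obstacle is bookkeeping rather than mathematics. The $\eta$-inexact relaxation adds $\eta$ only additively, so the bound $\mathcal L(x^0)\le$ relaxed optimum $+\eta$ must be carried through the reduction and completion unchanged. I also need to confirm that replacing $g^{\mathrm{fix}}$ by a branch-dependent offset is legitimate. It is, because the theorem allows an arbitrary fixed offset, which is also why the statement says "the target and fixed offset are arbitrary."
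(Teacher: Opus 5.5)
Your proposal is correct and follows the paper's proof: enumerate the heavy rows, absorb each branch's choices into the fixed offset, apply \Cref{thm:disc-additive} with the $\eta$ slack to the remaining rows, and compare against the branch that agrees with an optimal $\chi$. The only difference is cosmetic. The paper selects heavy rows by the threshold $\|v_k\|_2^2>\tau=\varepsilon\|V\|_F^2/(3r)$ and uses the averaging bound to show $|H|<3r/\varepsilon$, whereas you fix $|H|=\lceil 3r/\varepsilon\rceil$ and use the same averaging bound to control $R'^2$.
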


\begin{proof}
Set $\tau=\varepsilon\|V\|_F^2/(3r)$ and enumerate every binary choice on
$H=\{k:\|v_k\|_2^2>\tau\}$.  Its cardinality is less than
$3r/\varepsilon$.  In each branch, absorb those choices into the fixed
offset and apply \Cref{thm:disc-additive} to the remaining rows.
Their rank is at most $r$ and their squared norms are at most $\tau$.
The branch agreeing with a global optimum therefore has error at most
$\mathrm{OPT}_{\mathrm{dyn}}+r\tau/4+\eta
=\mathrm{OPT}_{\mathrm{dyn}}+\varepsilon\mathcal D(V)+\eta$.
Return the best branch.  If every row is fixed, the branch is evaluated
directly; when $V=0$, every choice already has the same loss.
\end{proof}

\paragraph{Extensions toward reusable rounding.}
The conditional-expectation construction controls one block-specific
Euclidean error.  Reusing one rounded row across several output blocks
requires control of its error in several directions at once.
Appendix~\ref{sec:disc-extensions} develops a randomized alternative with this
broader control; the reuse question remains open (\Cref{op:disc-reuse}).

\section{Static reusable rounding}
\label{sec:disc-static}

Static rounding chooses one weight matrix for future inputs.  Its loss must
therefore average over those inputs.  Whether it should include the mean
output error depends on a deployment choice: can the output bias be changed
after the weights are rounded?

\subsection{The reusable objective}

Let $x\in\R^{1\times K}$ be a random activation row.  For a fixed rounded
matrix $\widehat W=W+E_W$, write $E_W=[e_1,\ldots,e_n]$, so the column
$e_j\in\R^K$ produces output error $xe_j$ at coordinate $j$.  Define the
\emph{uncentered second moment}
\begin{equation}
  M=\E[x^\top x]\succeq0.
  \label{eq:disc-second-moment}
\end{equation}
The row convention makes $x^\top x$ a $K\times K$ matrix; $xx^\top$ would
be a scalar.  Its entry $M_{ik}=\E[x_i x_k]$ weights the interaction between
weight errors at input coordinates $i$ and $k$.  Expanding the output square
gives $\E(xe_j)^2=\sum_{i,k}(e_j)_iM_{ik}(e_j)_k$.

A nonzero input mean creates a systematic output error.  With
$\mu=\E[x]$, its contribution separates from the variance:
\begin{equation}
  \E(xe_j)^2
  =e_j^\top(M-\mu^\top\mu)e_j+(\mu e_j)^2.
  \label{eq:disc-mean-variance}
\end{equation}
Centering drops the last term.  A bias correction can remove it from the
actual output too; \Cref{rem:disc-bias} gives that correction.
First, the fixed-bias problem separates across output columns.

\needspace{9\baselineskip}
\begin{theorem}[Static product-loss objective]
\label{thm:disc-static}
Fix every scale, clipping interval, and quantization grid for a weight
factor $W\in\R^{K\times n}$ before choosing its rounding decisions.  Let
$\widehat W=W+E_W$, with $E_W=[e_1,\ldots,e_n]$, and suppose the remaining
feasible set factors by column as
$\mathcal R_1\times\cdots\times\mathcal R_n$.
Then
\begin{equation}
  \min_{\widehat W}\E_x\norm{x(\widehat W-W)}_2^2
  =
  \sum_{j=1}^n\min_{e_j\in\mathcal R_j} e_j^\top M e_j .
  \label{eq:disc-static-separation}
\end{equation}
\end{theorem}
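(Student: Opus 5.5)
The plan is to expand the expected squared norm column by column, rewrite each term as a quadratic form in $M$, and then use the product structure of the feasible set to split the joint minimization into independent column minimizations.

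\textbf{Step 1: expected loss per column.} For a fixed rounded matrix $\widehat W=W+E_W$, the output error row is $xE_W=(xe_1,\ldots,xe_n)$. Hence
\[
  \norm{x(\widehat W-W)}_2^2=\sum_{j=1}^n (xe_j)^2 .
\]
Each term is $(xe_j)^2=e_j^\top x^\top x\,e_j$, since $x$ is a row vector and $e_j$ is deterministic once $\widehat W$ is chosen. Taking expectations and using linearity gives
\[
  \E_x\norm{x(\widehat W-W)}_2^2=\sum_j e_j^\top M e_j ,
\]
with $M$ as in \eqref{eq:disc-second-moment}. The only integrability requirement is that $M$ be finite, i.e.\ the second moments of $x$ exist. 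I would state this as the standing assumption under which the objective is defined.

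\textbf{Step 2: separation.} Because all scales, clipping intervals and grids are fixed, the admissible error matrices $E_W$ are exactly the tuples $(e_1,\ldots,e_n)$ with $e_j\in\mathcal R_j$ independently; this is the factorization hypothesis. Minimizing the sum $\sum_j f_j(e_j)$ with $f_j(e)=e^\top Me$ over a Cartesian product equals $\sum_j\min_{e_j\in\mathcal R_j}f_j(e_j)$:
\begin{itemize}
  \item[$\ge$:] every feasible tuple has each summand bounded below by its column minimum;
  \item[$\le$:] choosing a minimizer in each column yields a feasible tuple attaining the sum.
\end{itemize}

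\textbf{Expected obstacle.} The main point to check is that the minima exist. Each $\mathcal R_j$ is finite: it is a product of two-point sets on active coordinates and singletons on forced coordinates, following \eqref{eq:disc-affine-error}. So the minima are attained and the equality holds as a statement about minima, not infima. Beyond that, the argument is routine; the substantive content is simply that the expectation passes inside the quadratic form.
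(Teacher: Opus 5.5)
Your proposal is correct and follows the paper's proof essentially step for step: split $\|xE_W\|_2^2$ into the column terms $(xe_j)^2$, take the expectation to get $\sum_j e_j^\top M e_j$, and use the product structure $\mathcal R_1\times\cdots\times\mathcal R_n$ to move the minimum through the sum. Your two added remarks are sound rigor that the paper leaves implicit: finite second moments are needed for $M$ to be defined, and finiteness of each $\mathcal R_j$ under the adjacent-level model ensures the minima are attained. One caveat: the column factorization is itself a hypothesis of the theorem, not a consequence of fixing the grids.
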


\begin{proof}
The product error separates across output coordinates:
\[
  \norm{xE_W}_2^2=\sum_{j=1}^n(xe_j)^2.
\]
Taking expectations and using \eqref{eq:disc-second-moment} gives
\[
  \E_x\norm{xE_W}_2^2
  =\sum_{j=1}^n e_j^\top \E[x^\top x]e_j
  =\sum_{j=1}^n e_j^\top M e_j .
\]
The product form of the feasible set lets the minimum pass
through the sum, proving \eqref{eq:disc-static-separation}.
\end{proof}

The quadratic identity holds for every rounded matrix, including when $M$
is singular.  A vector in $\ker M$ represents a weight error invisible to
the input distribution.  Independent minimization by column additionally
requires the product feasible set in the theorem: shared scales, clipping
bounds, or transforms must be fixed before that minimization.

\begin{remark}[When centering is correct]
\label{rem:disc-bias}
If the output bias can be recalibrated after rounding, each error column
admits the correction
\[
  c_j^\star=-\mu e_j,
  \qquad
  \min_{c_j}\E(xe_j+c_j)^2
  =e_j^\top(M-\mu^\top\mu)e_j.
\]
Thus centered covariance measures the loss with optimal bias correction;
$M$ measures the loss with fixed or absent bias.  The two coincide when
$\mu=0$.  Centered covariance alone does not predict which rounded weights
will perform worse: the omitted $(\mu e_j)^2$ determines that comparison,
and the Digits diagnostic measures its size in \Cref{sec:disc-exp-static}.
\end{remark}

\subsection{Relation to activation-weighted reconstruction}

Post-training weight quantizers choose rounded weights by minimizing the
change in a layer's outputs on calibration activations.  This is
\emph{activation-weighted reconstruction}.  Let $X\in\R^{N\times K}$
contain those activation rows and set $M=X^\top X/N$.  Then
\begin{equation}
  \frac1N\norm{X(\widehat W-W)}_F^2
  =\sum_{j=1}^n e_j^\top Me_j,
  \label{eq:disc-reconstruction}
\end{equation}
the empirical form of the static loss.

For comparison with GPTQ \cite{gptq}, define
$W_{\mathrm{GPTQ}}=W^\top$ and
$\widehat W_{\mathrm{GPTQ}}=\widehat W^\top$.
Equation \eqref{eq:disc-reconstruction} then equals
$N^{-1}\| (\widehat W_{\mathrm{GPTQ}}-W_{\mathrm{GPTQ}})X^\top\|_F^2$.
GPTQ minimizes this reconstruction loss through sequential error compensation.
The identity here specifies the exact metric and the condition under which
bias recalibration replaces it by centered covariance.  Empirically, the
mean is $\bar x=N^{-1}\mathbf1^\top X$ and the correction is
$c_j=-\bar x e_j$.

\paragraph{Constructive consequence.}
Factoring $M=\Phi\Phi^\top$ reduces the static objective to the same
Euclidean form used by the dynamic construction.  For column $j$, write its
affine adjacent-level representation as
\[
  e_j=e_j^{\mathrm{fix}}+D_j(\chi_j-\theta_j),
  \qquad \chi_j\in\{0,1\}^{|U_j|}.
\]
If $M=\Phi\Phi^\top$ has rank $r$, then
$e_j^\top Me_j=\norm{\Phi^\top e_j}_2^2$.  Applying the null-space reduction
to the rows of $D_j^\top\Phi$ preserves
$\Phi^\top e_j$ while leaving at most $\rank(D_j^\top\Phi)\le r$ fractional
decisions.  The fixed vector $\Phi^\top e_j^{\mathrm{fix}}$ enters the objective;
the adjustable rows determine the null space.
Starting from the continuous minimum and completing by conditional
expectation therefore returns a column with
\[
 e_j^\top M e_j
 \le \mathrm{OPT}_{j,\mathrm{relax}}+\frac14r_jR_j^2,
 \qquad
 r_j=\rank(D_j^\top\Phi),\quad
 R_j=\max_{k\in U_j}d_k\norm{\Phi_{k,:}}_2.
\]
Here $\mathrm{OPT}_{j,\mathrm{relax}}$ is the minimum over the continuous
box.  When there are no active choices, we set $R_j=0$, so the additive
term is zero.
This is \Cref{thm:disc-additive} applied to the static column.
The Digits diagnostic uses the same one-flip descent for each calibration
metric (\Cref{sec:disc-exp-static}).

\paragraph{Extensions.}
A low-rank approximation of $M$ retains its strongest input directions and
discards the weaker directions, which form the \emph{spectral tail}.  This
approximation
reduces the dimension of the rounding problem, but error in the discarded directions
still contributes to the full product loss.  Appendix~\ref{sec:disc-extensions}
bounds that contribution in terms of the same rounding output's tail error
second moment and reports a diagnostic for the required condition.

\section{Complexity of exact product-aware rounding}
\label{sec:disc-hardness}

Low rank reduces the number of decisions left for completion; finding the
best rounding pattern nonetheless remains hard.  At rank one, zero error can
encode \textsc{Partition}: split a list of integers into two equal-sum
groups.  With unrestricted rank, the loss can encode \textsc{Max-Cut}:
split a graph's vertices so that as many edges as possible cross the split.

We use rational inputs encoded in binary and measure time in the Turing
model.  The rank-one reduction is \emph{weakly} NP-complete because algorithms
polynomial in the coefficient values can take exponential time in their
bit length.  The graph reduction is \emph{strongly} NP-complete: its
coefficients have polynomially bounded values.

\subsection{Weak NP-completeness at rank one}

Exact zero-error decision is weakly NP-complete even at rank one.  Consider
the zero-threshold decision version of
\eqref{eq:disc-optdyn} with no fixed offset, one output coordinate, unit step size,
and target $\theta=\frac12\mathbf1$.  The input coefficients are positive
integers encoded in binary.  Fixing the positive common step size to one loses no
generality because a common scale does not affect the zero test.

\begin{theorem}[Rank-one zero-error rounding is weakly NP-complete]
\label{thm:disc-partition}
Given positive binary-encoded integers $v_1,\ldots,v_K$, deciding whether
\begin{equation}
  \min_{\chi\in\{0,1\}^K}
  \left|
    \sum_{k=1}^K(\chi_k-\tfrac12)v_k
  \right|^2
  =0
  \label{eq:disc-rank-one-decision}
\end{equation}
is weakly NP-complete.  The corresponding dynamic block has rank one.  The
same claim holds for static rounding with the rank-one metric
$M=vv^\top$.
\end{theorem}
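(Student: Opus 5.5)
The plan has three parts: membership in NP, hardness by a direct reduction from \textsc{Partition}, and a pseudo-polynomial algorithm showing the hardness is only weak.

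\textbf{Membership in NP.} A certificate is a binary vector $\chi\in\{0,1\}^K$. Verification computes the integer $\sum_k \chi_k v_k$ and compares it with $\tfrac12\sum_k v_k$; equivalently, it checks $2\sum_k\chi_kv_k=\sum_kv_k$. Both sides are integer sums with bit length polynomial in the input, so the check runs in polynomial time.

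\textbf{NP-hardness.} Map a \textsc{Partition} instance $v_1,\ldots,v_K$ (positive binary-encoded integers) to the instance of \eqref{eq:disc-rank-one-decision} with the same coefficients. Setting $A=\{k:\chi_k=1\}$, we have
\[
\sum_k(\chi_k-\tfrac12)v_k=\tfrac12\Bigl(\sum_{k\in A}v_k-\sum_{k\notin A}v_k\Bigr).
\]
This vanishes exactly when $A$ and its complement have equal sums. Hence the minimum in \eqref{eq:disc-rank-one-decision} is zero if and only if the \textsc{Partition} instance is a yes-instance, and the map is the identity on the coefficients.

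To place this in the paper's framework, I check three points.
\begin{itemize}
\item \emph{Dynamic interpretation.} Take $p=1$, $W_{\mathrm{blk}}=v$ as a $K\times 1$ column, unit steps, activation entries $z_k=\ell_k+\tfrac12$ strictly between integer levels so that every coordinate is active with $\theta_k=\tfrac12$, and no clipping, so $g^{\mathrm{fix}}=0$. Then $V=v$ has rank one (it is nonzero because the $v_k$ are positive), and $\mathcal L(\chi)$ equals \eqref{eq:disc-rank-one-decision}.
\item \emph{Static interpretation.} Take a single column $n=1$ whose weight entries sit at grid midpoints, and a distribution for $x$ with $\E[x^\top x]=vv^\top$, for example $x=v^\top$ deterministically. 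With $e=\chi-\tfrac12\mathbf 1$, \Cref{thm:disc-static} gives the loss $e^\top vv^\top e=(v^\top e)^2$, the same quantity.
\item \emph{Scaling.} The remark preceding the theorem notes that a common step scale does not affect the zero test.
\end{itemize}

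\textbf{Weak, not strong, NP-completeness.} I would give a pseudo-polynomial algorithm. Standard subset-sum dynamic programming over reachable sums in $[0,\sum_kv_k]$ decides whether $\tfrac12\sum_kv_k$ is attainable in $O(K\sum_kv_k)$ time. This is polynomial in the magnitudes of the coefficients but not in their bit length, which is exactly the sense of weak NP-completeness described at the start of \Cref{sec:disc-hardness}. Hardness itself relies on the weak NP-completeness of \textsc{Partition} under binary encoding.

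\textbf{Main obstacle.} The step needing the most care is the static version: I must confirm that the prescribed metric $M=vv^\top$ is realizable by an input distribution and that the static feasible set matches the dynamic one. Both are handled by the midpoint construction and a deterministic $x=v^\top$. Everything else is a direct identity.
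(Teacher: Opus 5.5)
Your proposal is correct and follows essentially the same route as the paper. Both use the identity reduction from \textsc{Partition} via $\sum_k(\chi_k-\tfrac12)v_k=\tfrac12\sum_k s_kv_k$, NP membership by checking a proposed $\chi$, the one-column block $W_{\mathrm{blk}}=v$, the deterministic row $x=v^\top$ giving $M=vv^\top$, and a subset-sum dynamic program for weakness, which the paper states just after its proof.
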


\begin{proof}
Set $s_k=2\chi_k-1\in\{-1,1\}$; switching to $\pm1$ signs turns the objective
into a signed sum, the form that connects it to \textsc{Partition}.  The
objective in
\eqref{eq:disc-rank-one-decision} is exactly
\begin{equation}
  \frac{1}{4}
  \left(\sum_{k=1}^K s_kv_k\right)^2 .
  \label{eq:disc-partition-objective}
\end{equation}
It vanishes if and only if the integers can be partitioned into two equal-sum
subsets.  We map a \textsc{Partition} instance to the one-column block
$W_{\mathrm{blk}}=v$ in polynomial time; that block has rank one whenever
$v\ne0$.  We can verify equality for a proposed sign assignment in polynomial
time, so
the decision problem is in NP.  Because \textsc{Partition} is weakly NP-complete
\cite{gareyjohnson1979}, this equivalence proves NP-completeness.
For the static realization, use the deterministic input row $x=v^\top$, so
$M=\E[x^\top x]=vv^\top$, and take
$e=\chi-\frac12\mathbf1$.  Then
\[
  e^\top Me
  =\frac{1}{4}
    \left(\sum_k s_kv_k\right)^2,
\]
which is the same objective.
\end{proof}

Dynamic programming tracks attainable sums up to $\sum_kv_k$ and solves
the problem in time polynomial in that value.  This is a
\emph{pseudo-polynomial} algorithm; the sum can be exponentially large in
the bit length of the input.

At rank one, the reduction in \Cref{lem:disc-null-walk} leaves at most one
fractional coordinate.  Completing that face requires just two comparisons.
Finding a face containing a globally optimal vertex is the hard part.

\paragraph{Consequences for rank-dependent algorithms.}
Rank remains useful for approximation: the additive certificate and
heavy-row enumeration in \Cref{thm:disc-additive,cor:disc-heavy-enumeration}
give polynomial-time guarantees at every fixed rank.  Exact optimization is
what rank alone cannot buy.  A method polynomial at every fixed rank would
already solve the hard rank-one case, so if $\ell$ is the encoded input
length, both $\ell^{f(r)}$ time (called XP) and $f(r)\ell^{O(1)}$ time
(fixed-parameter tractability, or FPT) for exact zero-error decision would
imply $\mathrm{P}=\mathrm{NP}$.

\subsection{Strong NP-completeness of the PSD threshold problem}

The next reduction uses the bounded entries of an unweighted graph.
Assigning each vertex an up/down choice will turn its edges into output
coordinates.

We count an error of one for
each edge whose endpoints receive the same choice, and zero when they
receive opposite choices.  Minimizing the total edge penalty maximizes the cut.
The matrix that collects these edge penalties is the \emph{signless
Laplacian}.  Throughout this subsection, $Q_G$ denotes the signless Laplacian
for a graph
$G$: the sum of its diagonal degree matrix and its adjacency matrix.  If $H$
is the unoriented vertex--edge incidence matrix of $G$, then $Q_G=HH^\top$.

\begin{theorem}[Rounding in an unrestricted PSD quadratic loss is strongly
NP-complete]
\label{thm:disc-maxcut}
Given an unweighted graph $G$ and an integer $C$, deciding whether
\[
 \min_{\chi\in\{0,1\}^K}
 (\chi-\tfrac12\mathbf1)^\top Q_G(\chi-\tfrac12\mathbf1)
 \le |E|-C
\]
is strongly NP-complete, where $K$ is the number of vertices and the unit
step size is fixed.  Consequently, threshold minimization of $e^\top Me$ for
$e=\chi-\tfrac12\mathbf1$, $\chi\in\{0,1\}^K$, is strongly NP-complete
for general $M\succeq0$.  The same reduction gives strong NP-hardness for
the dynamic problem with unrestricted block rank.
\end{theorem}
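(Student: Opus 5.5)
The plan is to reduce from \textsc{Max-Cut} in its decision form (``does $G$ have a cut with at least $C$ edges?''), which is strongly NP-complete because an unweighted graph has no large numbers. The reduction is the identity on $(G,C)$, so the whole argument rests on one algebraic identity. Write $s=2\chi-\mathbf 1\in\{-1,1\}^K$, so that $\chi-\tfrac12\mathbf1=\tfrac12 s$. Using $Q_G=HH^\top$, the quadratic form becomes
\[
  (\chi-\tfrac12\mathbf1)^\top Q_G(\chi-\tfrac12\mathbf1)
  =\tfrac14\norm{H^\top s}_2^2
  =\tfrac14\sum_{\{i,j\}\in E}(s_i+s_j)^2 .
\]
Each edge term is $4$ if its endpoints have the same sign and $0$ otherwise. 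The loss therefore equals the number of uncut edges, $|E|-\mathrm{cut}(s)$. Since sign vectors correspond bijectively to vertex bipartitions, the minimum is at most $|E|-C$ exactly when $G$ has a cut of size at least $C$.

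The remaining points are bookkeeping.

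\textbf{Membership in NP.} A certificate is $\chi$ itself. The quadratic form has integer entries bounded by the maximum degree, so it is evaluated in polynomial time.

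\textbf{Strong hardness.} All numbers in the constructed instance are bounded by $|E|\le K^2$, which is polynomial in the instance size. \textsc{Max-Cut} stays NP-complete under unary encoding (Garey--Johnson), so NP-hardness survives even for algorithms pseudo-polynomial in the magnitudes.

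\textbf{General PSD statement.} Take $M=Q_G=HH^\top\succeq 0$ and $e=\chi-\tfrac12\mathbf1$. This is a special case of the problem, so hardness transfers. To realize it as a static objective one can, as in \Cref{thm:disc-partition}, take $x$ uniform over the $|E|$ rows of $H^\top$ scaled by $\sqrt{|E|}$. Then $\E[x^\top x]=HH^\top$, and the scaling only multiplies by integers.

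\textbf{Dynamic version.} Take the block $W_{\mathrm{blk}}=H\in\R^{K\times|E|}$, with unit steps, no fixed offset, and $\theta=\tfrac12\mathbf1$. Then $V=H$, and by \eqref{eq:disc-objective} $\mathcal L(\chi)=\norm{H^\top(\chi-\theta)}_2^2$ is the same loss. The rank of $H$ is up to $K$, which is why the block rank must be unrestricted here.

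The only step needing care is the edge-by-edge computation of $(s_i+s_j)^2$ together with the factor $\tfrac14$. If it is off by a constant, the threshold shifts and must be rescaled. I expect no real obstacle beyond citing the strong NP-completeness of unweighted \textsc{Max-Cut} and noting that a minimization threshold with integer values is an equivalent decision form.
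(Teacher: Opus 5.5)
Your proposal is correct and matches the paper's proof essentially step for step. Both use the identity reduction from unweighted \textsc{Max-Cut} and the signless-Laplacian identity $\tfrac14 s^\top Q_G s=\tfrac14\sum_{\{i,j\}\in E}(s_i+s_j)^2=|E|-\operatorname{cut}_G(s)$, followed by NP membership, polynomially bounded magnitudes for strong hardness, and the dynamic block $W_{\mathrm{blk}}=H$. The only difference is cosmetic and concerns the static realization: the paper samples unscaled incidence columns, which gives $\E[x^\top x]=Q_G/|E|$ and a rescaled threshold $(|E|-C)/|E|$, whereas you scale the samples by $\sqrt{|E|}$ to get $M=Q_G$ exactly. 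That introduces irrational sample entries, which is harmless because the instance is $M$ itself.
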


\begin{proof}
Let $G=(V_G,E)$ be an unweighted graph with adjacency matrix $A_G$ and
diagonal degree matrix $\mathrm{Deg}_G$, and set
\[
  Q_G=\mathrm{Deg}_G+A_G,
\]
as above (the ordinary Laplacian has $-A_G$ in place of $+A_G$).  Let
$H\in\{0,1\}^{|V_G|\times|E|}$ be the unoriented vertex--edge incidence
matrix, with $H_{ve}=1$ exactly when vertex $v$ is an endpoint of edge $e$.
Then $Q_G=HH^\top\succeq0$.  For
$s=2\chi-\mathbf1\in\{-1,1\}^{|V_G|}$,
\begin{equation}
  \frac14s^\top Q_Gs
  =\frac14\sum_{\{i,j\}\in E}(s_i+s_j)^2
  =|E|-\operatorname{cut}_G(s).
  \label{eq:disc-signless-cut}
\end{equation}
Consequently,
\[
  e^\top Q_Ge
  =|E|-\operatorname{cut}_G(s).
\]
There is a cut of size at least $C$ if and only if this objective is at most
$|E|-C$.  This mapping is a polynomial reduction from unweighted
\textsc{Max-Cut}, whose simple-graph decision form is NP-complete
\cite{gareyjohnsonstockmeyer1976,gareyjohnson1979}; Karp's earlier
formulation allowed edge weights \cite{karp1972}.  The problem is also in NP:
a proposed $\chi$ specifies a cut, and the entries of $Q_G$ and the threshold
have polynomially bounded magnitude, so verifying
$e^\top Q_Ge\le|E|-C$ takes polynomial time.  The polynomial magnitude
bound also establishes strong NP-completeness.

The metric has a calibration interpretation.  If we sample an edge
$f$ uniformly and take $x=h_f^\top$, where $h_f$ is its incidence column
in $H$, we obtain $\E[x^\top x]=Q_G/|E|$; the positive factor $1/|E|$
rescales the threshold to $(|E|-C)/|E|$.  For the dynamic formulation, choose
$W_{\mathrm{blk}}=H$, so the squared product error is
$e^\top HH^\top e=e^\top Q_Ge$.
\end{proof}

\paragraph{Why an additive guarantee is appropriate.}
A finite-factor multiplicative guarantee must return zero whenever the
optimum is zero: multiplying zero by any finite factor still gives zero.
Such a polynomial-time algorithm would therefore decide the rank-one
\textsc{Partition} instances above, which is impossible unless
$\mathrm{P}=\mathrm{NP}$.  An additive guarantee leaves a controlled
absolute error and avoids the requirement to return zero at zero optimum.
\Cref{thm:disc-additive}
provides an additive guarantee, with a penalty governed by rank
and the largest step-weighted row norm.

The dependence on rank and approximation accuracy, rather than exact
optimization at fixed rank, is the remaining algorithmic question
(\Cref{sec:disc-discussion}).

\section{Experiments}
\label{sec:disc-experiments}

The experiments test four consequences of the theory: cancellation in
low-rank blocks, sensitivity to row-energy concentration, compensation for
clipping offsets, and the bias convention in static rounding.  Controlled
synthetic blocks isolate the first three effects; held-out Digits inputs
test the fourth under a common optimization procedure.

\newif\ifdiscresults
\IfFileExists{tex/discrepancy/generated/results.tex}{%
  \discresultstrue
  % Generated by experiments/reproduce.py; do not edit.
\providecommand{\ExperimentSeed}{20260723}
\providecommand{\DiscDynamicExactTable}{% Generated by experiments/reproduce.py; do not edit.
\begin{tabular}{rrrrrrrrr}
\toprule
$r$ & Exact & RTN & Bern. & Nearest & CE & Cert. & Diag.\ oracle & Face gap \\
\midrule
$1$ & $0.046$ & $0.515$ & $2.010$ & $0.046$ & $0.046$ & $0.188$ & $0.046$ & $0$ \\
$2$ & $4.796\times 10^{-5}$ & $0.655$ & $2.013$ & $0.088$ & $0.088$ & $0.375$ & $0.088$ & $0.084$ \\
$4$ & $0.009$ & $0.940$ & $2.007$ & $0.188$ & $0.165$ & $0.750$ & $0.144$ & $0.136$ \\
$8$ & $0.130$ & $0.837$ & $1.971$ & $0.407$ & $0.347$ & $1.500$ & $0.255$ & $0.120$ \\
\bottomrule
\end{tabular}
}
\providecommand{\DiscDynamicTable}{% Generated by experiments/reproduce.py; do not edit.
\begin{tabular}{rrrrrrr}
\toprule
$K$ & $r$ & RTN & Bern. & Nearest & CE [IQR] & Diag.\ oracle \\
\midrule
$64$ & $4$ & $0.794$ & $2.025$ & $0.045$ & $0.042\,[0.038,0.045]$ & $0.041$ \\
$64$ & $16$ & $0.929$ & $2.000$ & $0.244$ & $0.166\,[0.162,0.171]$ & $0.114$ \\
$256$ & $4$ & $0.827$ & $1.991$ & $0.012$ & $0.011\,[0.009,0.012]$ & $0.010$ \\
$256$ & $16$ & $0.969$ & $2.002$ & $0.055$ & $0.040\,[0.038,0.041]$ & $0.027$ \\
$1024$ & $4$ & $0.900$ & $2.003$ & $0.003$ & $0.003\,[0.002,0.003]$ & $0.003$ \\
$1024$ & $16$ & $0.899$ & $2.004$ & $0.014$ & $0.010\,[0.010,0.011]$ & $0.007$ \\
\bottomrule
\end{tabular}
}
\providecommand{\DiscDynamicRuntimeTable}{% Generated by experiments/reproduce.py; do not edit.
\begin{tabular}{lrr}
\toprule
Method & Error/dither & Median time (ms) \\
\midrule
RTN & $0.899$ & $0.002$ \\
Analytic Bernoulli evaluation & $2.004$ & $0.015$ \\
Nearest completion & $0.014$ & $75.851$ \\
CE completion & $0.010$ & $75.916$ \\
Reached-face diagnostic oracle & $0.007$ & $76.961$ \\
\bottomrule
\end{tabular}
}
\providecommand{\DiscSensitivityTable}{% Generated by experiments/reproduce.py; do not edit.
\begin{tabular}{rrrrrrrr}
\toprule
$\alpha$ & $\beta_{\rm rms}/\beta_{\max}$ & Cert. & RTN & Bern. & Nearest & CE & Diag.\ oracle \\
\midrule
$0$ & $1.000$ & $0.047$ & $0.903$ & $2.000$ & $0.011$ & $0.010$ & $0.010$ \\
$0.500$ & $0.155$ & $1.959$ & $0.759$ & $2.042$ & $0.041$ & $0.036$ & $0.035$ \\
$1.000$ & $0.080$ & $7.312$ & $0.768$ & $2.244$ & $0.479$ & $0.379$ & $0.306$ \\
$2.000$ & $0.065$ & $11.087$ & $0.758$ & $2.147$ & $0.710$ & $0.604$ & $0.493$ \\
\bottomrule
\end{tabular}
}
\providecommand{\DiscAffineTable}{% Generated by experiments/reproduce.py; do not edit.
\begin{tabular}{rrrrrrr}
\toprule
$\rho$ (\%) & $\theta$/dither & QP/dither & $\theta$ (raw) & QP (raw) & Ratio & QP time (ms) \\
\midrule
$0$ & $0.010$ & $0.009$ & $0.215$ & $0.199$ & $1.1$ & $13.421$ \\
$5$ & $0.181$ & $0.010$ & $3.853$ & $0.213$ & $18.1$ & $12.686$ \\
$10$ & $0.450$ & $0.010$ & $9.597$ & $0.208$ & $46.1$ & $12.131$ \\
\bottomrule
\end{tabular}
}
\providecommand{\DiscStaticTable}{% Generated by experiments/reproduce.py; do not edit.
\begin{tabular}{rrlr}
\toprule
Bits & Cal. & Method & Held-out/RTN [95\% RI] \\
\midrule
$3$ & 128 & RTN & $1.000\,[1.000,1.000]$ \\
$3$ & 128 & RTN + bias & $0.860\,[0.830,0.871]$ \\
$3$ & 128 & Centered, fixed bias & $1.264\,[0.933,2.035]$ \\
$3$ & 128 & Centered + bias & $0.756\,[0.598,0.837]$ \\
$3$ & 128 & Uncentered & $0.842\,[0.692,0.929]$ \\
$3$ & full & RTN & $1.000\,[1.000,1.000]$ \\
$3$ & full & RTN + bias & $0.858\,[0.829,0.870]$ \\
$3$ & full & Centered, fixed bias & $1.250\,[0.884,2.027]$ \\
$3$ & full & Centered + bias & $0.730\,[0.575,0.812]$ \\
$3$ & full & Uncentered & $0.799\,[0.682,0.905]$ \\
$4$ & 128 & RTN & $1.000\,[1.000,1.000]$ \\
$4$ & 128 & RTN + bias & $0.616\,[0.375,0.738]$ \\
$4$ & 128 & Centered, fixed bias & $1.148\,[0.867,1.753]$ \\
$4$ & 128 & Centered + bias & $0.378\,[0.250,0.472]$ \\
$4$ & 128 & Uncentered & $0.478\,[0.319,0.581]$ \\
$4$ & full & RTN & $1.000\,[1.000,1.000]$ \\
$4$ & full & RTN + bias & $0.615\,[0.374,0.737]$ \\
$4$ & full & Centered, fixed bias & $1.260\,[0.832,1.835]$ \\
$4$ & full & Centered + bias & $0.351\,[0.237,0.462]$ \\
$4$ & full & Uncentered & $0.467\,[0.317,0.562]$ \\
\bottomrule
\end{tabular}
}
\providecommand{\DiscrepancyDynamicCEMilliseconds}{75.916}
\providecommand{\DiscrepancyDynamicRTNMilliseconds}{0.002}
\providecommand{\DiscrepancyDynamicExactK}{16}
\providecommand{\DiscrepancyDynamicTrials}{25}
\providecommand{\DiscrepancyDynamicBlocks}{10}
\providecommand{\DiscrepancyDynamicTrialsPerConfiguration}{250}
\providecommand{\DiscrepancyDynamicLargestK}{1024}
\providecommand{\DiscrepancyDynamicLargestRank}{16}
\providecommand{\DiscrepancyDynamicBernoulliKThousandRankSixteenRatio}{2.004}
\providecommand{\DiscrepancyDynamicRTNKThousandRankSixteenRatio}{0.899}
\providecommand{\DiscrepancyDynamicArbitraryKThousandRankSixteenRatio}{0.014}
\providecommand{\DiscrepancyDynamicCEKThousandRankSixteenRatio}{0.010}
\providecommand{\DiscrepancyDynamicNullKThousandRankSixteenRatio}{0.007}
\providecommand{\DiscrepancySensitivityBalancedRatio}{1.000}
\providecommand{\DiscrepancySensitivityBalancedCertificate}{0.047}
\providecommand{\DiscrepancySensitivityBalancedCERatio}{0.010}
\providecommand{\DiscrepancySensitivityImbalancedRatio}{0.065}
\providecommand{\DiscrepancySensitivityImbalancedCertificate}{11.087}
\providecommand{\DiscrepancySensitivityImbalancedCERatio}{0.604}
\providecommand{\DiscrepancyAffineK}{256}
\providecommand{\DiscrepancyAffineRank}{4}
\providecommand{\DiscrepancyAffineThetaTenPercentRatio}{0.450}
\providecommand{\DiscrepancyAffineBoxTenPercentRatio}{0.010}
\providecommand{\DiscrepancyAffineFivePercentImprovement}{18.1}
\providecommand{\DiscrepancyAffineTenPercentImprovement}{46.1}
\providecommand{\DiscrepancyMaximumWalkInvariantError}{2.004\times 10^{-14}}
\providecommand{\DiscrepancyMaximumAffineWalkInvariantError}{4.595\times 10^{-15}}
\providecommand{\DiscrepancyMaximumCEBoundViolation}{0}
\providecommand{\DiscrepancyMaximumExactViolation}{4.718\times 10^{-16}}
\providecommand{\DiscrepancyStaticSplits}{5}
\providecommand{\DiscrepancyStaticCalibrationSubsets}{20}
\providecommand{\DiscrepancyStaticCalibrationSize}{128}
\providecommand{\DiscrepancyStaticFullCalibrationSize}{359}
\providecommand{\DiscrepancyStaticTestSize}{360}
\providecommand{\DiscrepancyStaticFitSize}{1078}
\providecommand{\DiscrepancyStaticCalibrationPoolSize}{359}
\providecommand{\DiscrepancyStaticRidge}{0.001}
\providecommand{\DiscrepancyStaticThreeBitCenteredRatio}{1.264}
\providecommand{\DiscrepancyStaticThreeBitCenteredBiasRatio}{0.756}
\providecommand{\DiscrepancyStaticThreeBitUncenteredRatio}{0.842}
\providecommand{\DiscrepancyStaticThreeBitUncenteredFlips}{30}
\providecommand{\DiscrepancyStaticFourBitCenteredRatio}{1.148}
\providecommand{\DiscrepancyStaticFourBitCenteredBiasRatio}{0.378}
\providecommand{\DiscrepancyStaticFourBitUncenteredRatio}{0.478}
\providecommand{\DiscrepancyStaticFourBitUncenteredFlips}{48}
\providecommand{\DiscrepancyStaticThreeBitTransferCorrelation}{0.155}
\providecommand{\DiscrepancyStaticFourBitTransferCorrelation}{0.607}
\providecommand{\DiscrepancyStaticThreeBitPooledTransferCorrelation}{0.897}
\providecommand{\DiscrepancyStaticFourBitPooledTransferCorrelation}{0.962}
\providecommand{\DiscrepancyStaticThreeBitTransferCorrelationBySplit}{0.155}
\providecommand{\DiscrepancyStaticFourBitTransferCorrelationBySplit}{0.607}
\providecommand{\DiscrepancyStaticThreeBitFullCenteredRatio}{1.250}
\providecommand{\DiscrepancyStaticThreeBitFullCenteredBiasRatio}{0.730}
\providecommand{\DiscrepancyStaticThreeBitFullUncenteredRatio}{0.799}
\providecommand{\DiscrepancyStaticFourBitFullCenteredRatio}{1.260}
\providecommand{\DiscrepancyStaticFourBitFullCenteredBiasRatio}{0.351}
\providecommand{\DiscrepancyStaticFourBitFullUncenteredRatio}{0.467}
\providecommand{\DiscrepancyStaticThreeBitFullTransferCorrelation}{0.962}
\providecommand{\DiscrepancyStaticFourBitFullTransferCorrelation}{0.992}
\providecommand{\DiscrepancyStaticMaximumIdentityError}{1.665\times 10^{-16}}
\providecommand{\DiscrepancyStaticMaximumCenteredBiasObjectiveGap}{3.816\times 10^{-17}}
}{%
  \discresultsfalse
}
\newif\ifgswtailresults
\IfFileExists{tex/discrepancy/generated/gsw_tail_results.tex}{%
  \gswtailresultstrue
  % Generated by experiments/reproduce.py; do not edit.
\providecommand{\ExperimentSeed}{20260723}
\providecommand{\DiscrepancyGSWTailK}{32}
\providecommand{\DiscrepancyGSWTailBlocks}{4}
\providecommand{\DiscrepancyGSWTailDraws}{1024}
\providecommand{\DiscrepancyGSWTailMinimumMedianRatio}{4.391}
\providecommand{\DiscrepancyGSWTailMaximumMedianRatio}{8.942}
\providecommand{\DiscrepancyGSWTailMinimumAnalyticBernoulliRatio}{2.855}
\providecommand{\DiscrepancyGSWTailMaximumAnalyticBernoulliRatio}{2.999}
\providecommand{\DiscrepancyGSWTailMinimumMedianOverAnalyticBernoulli}{1.471}
\providecommand{\DiscrepancyGSWTailMaximumMedianOverAnalyticBernoulli}{3.077}
\providecommand{\DiscrepancyGSWTailMinimumMedianOverSampledBernoulli}{1.303}
\providecommand{\DiscrepancyGSWTailMaximumMedianOverSampledBernoulli}{2.842}
\providecommand{\DiscrepancyGSWTailMedianHalfToFullChangePercent}{3.1}
\providecommand{\DiscrepancyGSWTailMaximumHalfToFullChangePercent}{14.1}
\providecommand{\DiscrepancyGSWTailMaximumMarginalError}{0.049}
\providecommand{\DiscrepancyGSWTailMaximumOrthogonalityError}{1.258\times 10^{-11}}
}{%
  \gswtailresultsfalse
}

\subsection{Dynamic low-rank blocks}
\label{sec:disc-exp-dynamic}

Many rounding decisions can cancel in a product with few output directions.
The first study measures how that cancellation changes with row count, rank,
and output width.  It also separates the quality of the final rounding step
from the best solution on the full binary cube.

\paragraph{Design.}
For each balanced configuration, the output block has shape $K\times p$ with
$p=r$.  Its row directions are independent standard-Gaussian draws normalized
to unit norm; the common adjacent-level step size is one.
Offset fractions are independent uniform random variables on $[0,1]$.  We use
\ifdiscresults\DiscrepancyDynamicBlocks\else10\fi{} independent
blocks per $(K,r)$ and
\ifdiscresults\DiscrepancyDynamicTrials\else25\fi{} independent fractional
targets per block.  The exact study sets $K=16$ and
$r\in\{1,2,4,8\}$.  The scalable study uses every $(K,r)$ pair with
$K\in\{64,128,256,512,1024\}$ and $r\in\{1,2,4,8,16\}$; the printed table
reports a representative subset, and the artifact contains the full grid.
A separate fixed-width sweep holds $K=256$ and $p=32$ while varying
$r\in\{1,4,16\}$.  It embeds each $K\times r$ factor through a seeded
orthonormal $p\times r$ matrix, preserving row norms and product errors.
These wide blocks separate rank from output width.
\paragraph{Methods and metrics.}
We first apply the common null-space reduction, which leaves at most $r$
fractional coordinates.  We then compare two polynomial completion rules:
\emph{nearest completion} rounds those coordinates to their nearest endpoints,
whereas \emph{CE completion} applies the conditional-expectation
completion of \Cref{lem:disc-ce}.  The tables use the labels Nearest and CE.

To distinguish completion quality from global optimality, the
\emph{reached-face diagnostic oracle} exhaustively searches the face reached
by the reduction.  A face is
the subset of $[0,1]^{|U|}$ obtained by fixing some coordinates to $0$ or $1$;
the reached face has dimension at most $r$ and therefore at most $2^r$
vertices.  Its $2^r$ search gives the optimum on that reached face.  At
$K=16$, full $2^{16}$ enumeration additionally supplies the global optimum,
allowing us to measure the gap between the two.
\ifdiscresults
The ``face gap'' column of \Cref{tab:disc-dynamic-exact} reports the pooled
median of (diagnostic-oracle error minus global exact error), normalized by
dither.
\fi

The independent baselines are round-to-nearest (RTN) and the analytic
same-grid Bernoulli expectation from \Cref{lem:disc-bernoulli}.
Thus RTN and Bernoulli show independent decisions, Nearest and CE compare
two ways of finishing the same reduction, and the diagnostic oracle
measures what the best completion on that reached face could achieve.

We divide every squared product error by the analytic unclipped-dither
expectation, so a ratio below one means less error than this dither
reference.  This denominator puts different block sizes on the same scale.
The tables report the median across ten block-level medians, each
summarizing 25 targets.  Brackets denote the interquartile range across these
medians.  The raw pooled-target statistics remain in the artifact.

\Cref{fig:disc-dynamic-scaling} summarizes the two comparisons: its left
panel shows CE error as more rows share each output direction, and its right
panel compares the completion rules.  Both panels use dither-normalized
error and logarithmic axes.

\ifdiscresults
The balanced sweep shows decreasing CE error as row count grows at fixed
rank.  For $r=4$, the median ratio falls from $0.042$ at $K=64$ to $0.011$
at $K=256$ and $0.003$ at $K=1024$.  For $r=16$, the corresponding ratios
are $0.166$, $0.040$, and $0.010$.  The certificate scales as $3r/K$ in
these balanced blocks, and the measured errors lie well below it: at
$K=16,r=4$, the certificate is $0.75$ dither units and the
CE median is $0.165$, a factor of $4.5$ apart
(\Cref{tab:disc-dynamic-exact}).

\begin{table}[tbp]
  \centering
  \scriptsize
  % Generated by experiments/reproduce.py; do not edit.
\begin{tabular}{rrrrrrrrr}
\toprule
$r$ & Exact & RTN & Bern. & Nearest & CE & Cert. & Diag.\ oracle & Face gap \\
\midrule
$1$ & $0.046$ & $0.515$ & $2.010$ & $0.046$ & $0.046$ & $0.188$ & $0.046$ & $0$ \\
$2$ & $4.796\times 10^{-5}$ & $0.655$ & $2.013$ & $0.088$ & $0.088$ & $0.375$ & $0.088$ & $0.084$ \\
$4$ & $0.009$ & $0.940$ & $2.007$ & $0.188$ & $0.165$ & $0.750$ & $0.144$ & $0.136$ \\
$8$ & $0.130$ & $0.837$ & $1.971$ & $0.407$ & $0.347$ & $1.500$ & $0.255$ & $0.120$ \\
\bottomrule
\end{tabular}

  \caption{Small-instance comparison at
  $K=\DiscrepancyDynamicExactK$.  Method entries are medians across
  \DiscrepancyDynamicBlocks{} block-level medians, with
  \DiscrepancyDynamicTrials{} targets per block.  ``Face gap'' is the pooled
  median of (diagnostic-oracle error minus global exact error), normalized by
  dither.  The positive gaps for $r=2,4,8$ measure the difference between the best
  reached-face completion and the global optimum.  Headings abbreviate round-to-nearest
  (RTN), same-grid Bernoulli (Bern.), conditional expectation (CE), and
  reached-face diagnostic oracle (Diag.).  ``Cert.'' is $3r/K$.
  Each rank uses independent block draws, so the Exact column compares
  methods within a row; its cross-row variation is not a rank trend.}
  \label{tab:disc-dynamic-exact}
\end{table}

\begin{table}[tbp]
  \centering
  \scriptsize
  % Generated by experiments/reproduce.py; do not edit.
\begin{tabular}{rrrrrrr}
\toprule
$K$ & $r$ & RTN & Bern. & Nearest & CE [IQR] & Diag.\ oracle \\
\midrule
$64$ & $4$ & $0.794$ & $2.025$ & $0.045$ & $0.042\,[0.038,0.045]$ & $0.041$ \\
$64$ & $16$ & $0.929$ & $2.000$ & $0.244$ & $0.166\,[0.162,0.171]$ & $0.114$ \\
$256$ & $4$ & $0.827$ & $1.991$ & $0.012$ & $0.011\,[0.009,0.012]$ & $0.010$ \\
$256$ & $16$ & $0.969$ & $2.002$ & $0.055$ & $0.040\,[0.038,0.041]$ & $0.027$ \\
$1024$ & $4$ & $0.900$ & $2.003$ & $0.003$ & $0.003\,[0.002,0.003]$ & $0.003$ \\
$1024$ & $16$ & $0.899$ & $2.004$ & $0.014$ & $0.010\,[0.010,0.011]$ & $0.007$ \\
\bottomrule
\end{tabular}

  \caption{Balanced scalable comparison for a representative subset of the
  swept $(K,r)$ grid; the artifact reports all 25 configurations.  Entries are
  medians across ten block-level medians; conditional-expectation (CE)
  completion also shows its interquartile range (IQR).  Each block contributes
  25 independent fractional targets.  The Bernoulli column is an analytic
  expectation at each observed fractional target, whereas dither is the denominator and
  therefore equals one.  RTN denotes round-to-nearest and Diag.\ the
  reached-face diagnostic oracle, which is exponential in $r$.}
  \label{tab:disc-dynamic-scalable}
\end{table}

\begin{table}[tbp]
  \centering
  \small
  % Generated by experiments/reproduce.py; do not edit.
\begin{tabular}{lrr}
\toprule
Method & Error/dither & Median time (ms) \\
\midrule
RTN & $0.899$ & $0.002$ \\
Analytic Bernoulli evaluation & $2.004$ & $0.015$ \\
Nearest completion & $0.014$ & $75.851$ \\
CE completion & $0.010$ & $75.916$ \\
Reached-face diagnostic oracle & $0.007$ & $76.961$ \\
\bottomrule
\end{tabular}

  \caption{Representative end-to-end time per fractional target at
  $K=1024,r=16$.  Losses are medians across block medians; times are pooled
  medians and include null-space reduction and residual rounding.  The
  Bernoulli timing measures closed-form evaluation of its expected loss.  The reached-face diagnostic-oracle time is
  specific to $r=16$ and retains exponential worst-case scaling.  RTN denotes
  round-to-nearest and CE conditional expectation.}
  \label{tab:disc-dynamic-runtime}
\end{table}

The fixed-width results in \Cref{tab:disc-dynamic-wide} show cancellation
in rank-deficient blocks.  Every row has 32 output entries, while
the rank varies independently.  The table gives raw squared errors as well
as ratios; here one dither unit is $256/12=21.333$.

\begin{table}[tbp]
  \centering
  \small
  \begin{tabular}{rrrrrrr}
\toprule
$K$ & $p$ & $r$ & RTN/dither & CE/dither & Cert./dither & CE (raw) \\
\midrule
$256$ & $32$ & $1$ & $0.447$ & $0.003$ & $0.012$ & $0.061$ \\
$256$ & $32$ & $4$ & $0.926$ & $0.011$ & $0.047$ & $0.236$ \\
$256$ & $32$ & $16$ & $0.964$ & $0.040$ & $0.188$ & $0.862$ \\
\bottomrule
\end{tabular}

  \caption{Fixed-width balanced blocks: ten independent blocks and 25 targets
  per block.  Entries are medians across block medians.  ``Cert.'' is the
  deterministic bound, and raw CE error is the squared Euclidean product
  error before normalization.}
  \label{tab:disc-dynamic-wide}
\end{table}

At $K=1024,r=16$, CE gives
\DiscrepancyDynamicCEKThousandRankSixteenRatio{} dither units, compared with
\DiscrepancyDynamicRTNKThousandRankSixteenRatio{} for RTN.
The measured cost is \(\DiscrepancyDynamicCEMilliseconds\)\,ms per target
(\Cref{tab:disc-dynamic-runtime}).  This timing reflects the Python/NumPy
reference implementation, which recomputes the rank and runs a small SVD at
every reduction step; it establishes the computational cost of the algorithm
rather than the throughput of an optimized kernel.
Appendix~\ref{app:disc-dynamic-protocol} gives the
CPU, BLAS, thread settings, and same-machine product timings.
\fi

\begin{figure}[tbp]
  \centering
  \IfFileExists{figures/discrepancy/dynamic_scaling.pdf}{%
    \includegraphics[width=0.90\linewidth]
      {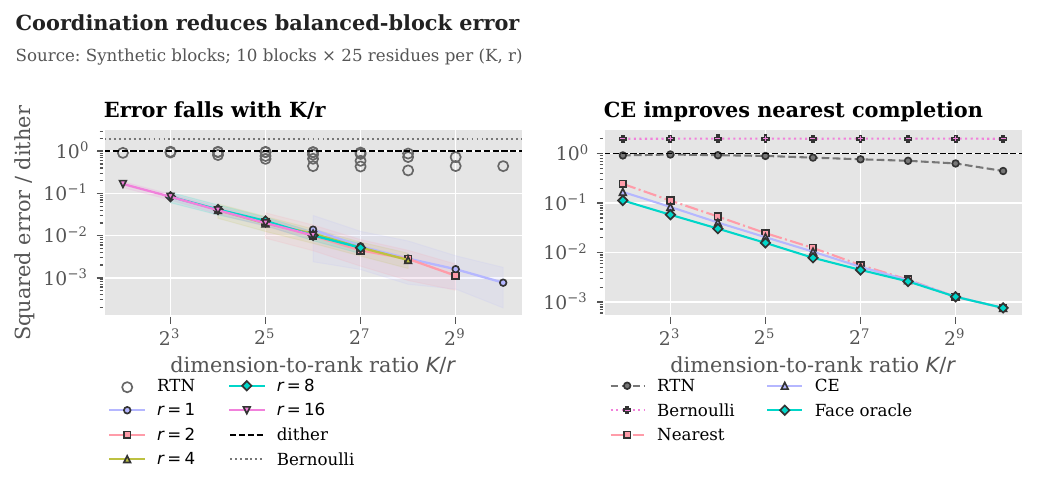}%
  }{}
  \caption{Balanced dynamic rounding.  The left panel shows
  conditional-expectation (CE) completion against
  $K/r$ by rank; colored bands are pooled target-level interquartile ranges.
  The right panel separates round-to-nearest (RTN), analytic Bernoulli, the two polynomial
  completion rules, and the reached-face diagnostic oracle; each right-panel
  point pools all $p=r$ configurations with the displayed ratio.
  Its bands describe pooled heterogeneity across configurations, blocks,
  and targets; the tables report summaries for each configuration.  Tables
  \ref{tab:disc-dynamic-exact}--\ref{tab:disc-dynamic-scalable} report
  cross-block summaries.}
  \label{fig:disc-dynamic-scaling}
\end{figure}

\subsection{Row-norm imbalance}
\label{sec:disc-exp-imbalance}

The fixed-width study isolates rank.  We next change row norms.  This imbalance tests the prediction that $K_{\mathrm{eff}}$, rather than rank alone, controls
the certified bound.  We fix $K=256$, take $r\in\{1,4\}$, and assign row norms
proportional to $j^{-\alpha}$ for
$\alpha\in\{0,0.5,1,2\}$, after a seeded random permutation.  We rescale each
block to unit root-mean-square row norm, keeping the dither denominator fixed
while $\beta_{\mathrm{rms}}/\beta_{\max}$ (and hence $K_{\mathrm{eff}}$) changes.

\ifdiscresults
\Cref{tab:disc-sensitivity} and \Cref{fig:disc-dynamic-stress-tests}(a)
report the sweep.  For $r=4$, moving from $\alpha=0$ to $\alpha=2$ lowers
$\beta_{\mathrm{rms}}/\beta_{\max}$ from
\DiscrepancySensitivityBalancedRatio{} to
\DiscrepancySensitivityImbalancedRatio{}.  The certified CE-to-dither ratio
rises from \DiscrepancySensitivityBalancedCertificate{} to
\DiscrepancySensitivityImbalancedCertificate{}, crossing one as predicted.
The realized block-median ratio for CE completion also degrades, from
\DiscrepancySensitivityBalancedCERatio{} to
\DiscrepancySensitivityImbalancedCERatio{}, while staying below one.  At $\alpha=2$, the certificate therefore fails
to certify a gain that the measured solution attains: the sufficient
$K_{\mathrm{eff}}>3r$ condition is not necessary for cancellation.  For $r=1$, the realized curve is non-monotone: the CE
median ratios are $0.00352$, $0.00141$, $0.000726$, and $0.0502$ at
$\alpha=0,0.5,1,2$, respectively.  Error first decreases and then increases,
although the certified bound weakens monotonically.  The sweep separates two effects: concentration weakens the
certificate, while the realized median also depends on the particular
targets and row directions.

\begin{table}[tbp]
  \centering
  \scriptsize
  \resizebox{\linewidth}{!}{% Generated by experiments/reproduce.py; do not edit.
\begin{tabular}{rrrrrrrr}
\toprule
$\alpha$ & $\beta_{\rm rms}/\beta_{\max}$ & Cert. & RTN & Bern. & Nearest & CE & Diag.\ oracle \\
\midrule
$0$ & $1.000$ & $0.047$ & $0.903$ & $2.000$ & $0.011$ & $0.010$ & $0.010$ \\
$0.500$ & $0.155$ & $1.959$ & $0.759$ & $2.042$ & $0.041$ & $0.036$ & $0.035$ \\
$1.000$ & $0.080$ & $7.312$ & $0.768$ & $2.244$ & $0.479$ & $0.379$ & $0.306$ \\
$2.000$ & $0.065$ & $11.087$ & $0.758$ & $2.147$ & $0.710$ & $0.604$ & $0.493$ \\
\bottomrule
\end{tabular}
}
  \caption{Row-norm sweep at $K=256,r=4$.  ``Cert.'' is the deterministic
  conditional-expectation (CE) certified bound divided by dither; method
  columns are medians across the ten block-level medians.  RTN denotes
  round-to-nearest, Bern.\ same-grid Bernoulli, Diag.\ the reached-face
  diagnostic oracle, and RMS root mean square.  All blocks have unit RMS row
  norm.}
  \label{tab:disc-sensitivity}
\end{table}
\fi

\subsection{Fixed offsets from clipping}
\label{sec:disc-exp-affine}

Clipping forces errors that the rounder cannot change directly, but the
remaining adjustable choices can cancel part of their effect on the
product.  We test whether a continuous optimization that includes these forced
errors
can find the compensation before the final rounding step.

\paragraph{Design.}
To test \Cref{thm:disc-additive}, we fix $K=256$, $r=4$, and
force $\rho\in\{0,0.05,0.10\}$ of the coordinates to clipping endpoints.
Their outward signs are independent Rademacher draws (random signs) and their
overshoots are uniform on $[0.25,1.25]$ grid steps.  We compare two
initializations followed by the same conditional-expectation completion.
The
\emph{fractional-target start} begins the null-space reduction at the observed
targets; the \emph{quadratic-program (QP) start} first solves the continuous
box-constrained least-squares relaxation, whose solution lies in the box
$[0,1]^{|U|}$, and begins the reduction there.  Both errors are divided by the same
full-block dither scale $K/12=21.333$, held fixed across clipping fractions.
The raw errors and the former affine Bernoulli scale are also recorded.

\ifdiscresults
The continuous relaxation cancels the fixed offset to floating-point
precision in every tested instance.  The remaining error comes from
rounding its at most four fractional coordinates.  A start at $\theta$
instead preserves $V^\top\theta$, leaving the fixed offset uncompensated.
This explains the separation in \Cref{tab:disc-affine}:
at ten-percent clipping, raw median squared error falls from $9.597$ to
$0.208$, or from
\DiscrepancyAffineThetaTenPercentRatio{} to
\DiscrepancyAffineBoxTenPercentRatio{} dither units.  The table reports
both scales so the changing offset cannot obscure the error increase.

\begin{table}[H]
  \centering
  \small
  \resizebox{\linewidth}{!}{% Generated by experiments/reproduce.py; do not edit.
\begin{tabular}{rrrrrrr}
\toprule
$\rho$ (\%) & $\theta$/dither & QP/dither & $\theta$ (raw) & QP (raw) & Ratio & QP time (ms) \\
\midrule
$0$ & $0.010$ & $0.009$ & $0.215$ & $0.199$ & $1.1$ & $13.421$ \\
$5$ & $0.181$ & $0.010$ & $3.853$ & $0.213$ & $18.1$ & $12.686$ \\
$10$ & $0.450$ & $0.010$ & $9.597$ & $0.208$ & $46.1$ & $12.131$ \\
\bottomrule
\end{tabular}
}
  \caption{Fixed-offset study.  Error columns are medians across ten
  block-level medians, each based on 25 targets; improvement is their ratio.
  The dither denominator is $256/12$ throughout.  Runtime includes the continuous-relaxation solve.  QP
  denotes the box-constrained quadratic program.}
  \label{tab:disc-affine}
\end{table}
\fi

\begin{figure}[tbp]
  \centering
  \begin{subfigure}[t]{0.485\linewidth}
    \centering
    \IfFileExists{figures/discrepancy/sensitivity_imbalance.pdf}{%
      \includegraphics[width=\linewidth]
        {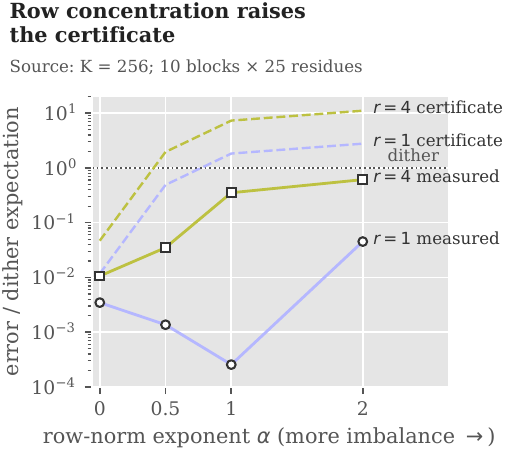}%
    }{}
    \caption{Row-norm imbalance.}
    \label{fig:disc-sensitivity}
  \end{subfigure}\hfill
  \begin{subfigure}[t]{0.485\linewidth}
    \centering
    \IfFileExists{figures/discrepancy/affine_saturation.pdf}{%
      \includegraphics[width=\linewidth]
        {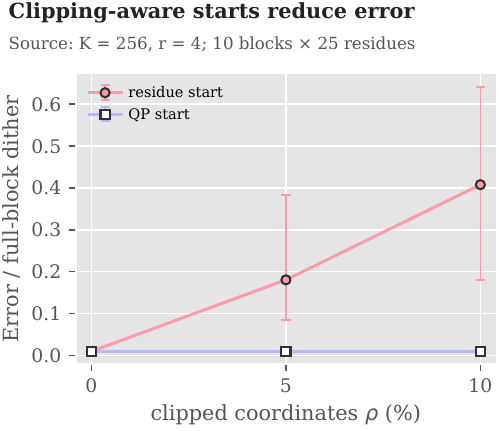}%
    }{}
    \caption{Fixed clipping offsets.}
    \label{fig:disc-affine}
  \end{subfigure}
  \caption{Stress tests for the dynamic-rounding construction.  \emph{(a)}
  Measured conditional-expectation (CE) error and its certified bound under
  increasing row-norm imbalance.  \emph{(b)} Clipping-aware
  quadratic-program (QP) initialization keeps the product error small as the
  fixed-coordinate fraction grows.  Each estimate pools ten blocks and 25
  targets per block; vertical bars in (b) span pooled interquartile ranges.}
  \label{fig:disc-dynamic-stress-tests}
\end{figure}

In the affine study the relaxation cancels the fixed offset exactly, so the
reported improvement isolates the rounding penalty itself.  When cancellation
is incomplete, the same guarantee of \Cref{thm:disc-additive} adds the
relaxed residual to the rounding penalty.

\ifdiscresults
\paragraph{Numerical validation.}
We determine numerical rank by counting singular values above
$\sigma_{\max}(V)\max(|U|,p)\varepsilon_{\mathrm{mach}}$, where
$\varepsilon_{\mathrm{mach}}$ is machine precision, and independently
check that the reduction preserves the product.
The largest preserved-product residual is
\(\DiscrepancyMaximumWalkInvariantError\) in the balanced and imbalance
studies and \(\DiscrepancyMaximumAffineWalkInvariantError\) in the affine
study.  The maximum CE-bound violation is
\(\DiscrepancyMaximumCEBoundViolation\); no violations occur for
diagnostic-oracle dominance or the rank-sized residual condition.  No method beats full
enumeration by more than
\(\DiscrepancyMaximumExactViolation\), which we attribute to floating-point
roundoff.
\fi

\subsection{A mean-term diagnostic on Digits}
\label{sec:disc-exp-static}

A fixed output bias leaves the input mean in the product-error objective.
An adjustable bias can absorb it.  This experiment makes that distinction
visible in a small ridge model on 1,797 public $8\times8$ Digits images,
scaled by $1/16$ \cite{alpaydin1998digits,pedregosa2011scikit}.
It is a diagnostic of the mean contribution, with held-out data used to
measure transfer of the calibration choice.

\paragraph{Design and solver.}
Five stratified splits each fit a ten-output ridge model, reserve a
calibration pool, and hold out evaluation inputs.  At three and four bits,
we compare RTN, centered one-flip descent, and uncentered one-flip descent.
Every descent starts from RTN and uses the same stopping rule.  Both RTN and
the centered endpoint are also evaluated with the correction
$c=-\mu(\widehat W-W)$, using the same calibration mean $\mu$, so both
methods receive the same bias freedom and the paired comparison isolates the
effect of the weight change.
Held-out inputs never choose the correction.
Appendix~\ref{app:disc-static-protocol} specifies the splits, grids, and solver.

\paragraph{Results and mechanism.}
\ifdiscresults
Centered fixed-bias rounding has median held-out/RTN ratios $1.264$,
$1.250$, $1.148$, and $1.260$ across the four bit-width/calibration-size
settings, with bootstrap intervals containing one
(\Cref{tab:disc-static}); the paired split results and arithmetic means
appear in \Cref{tab:disc-static-paired}.  The median ratio exceeds one at
every setting, consistent with the predicted mean-term mechanism.  The
experiment is a diagnostic of that mechanism rather than a population-level
comparison.

Giving both methods the same bias freedom changes the comparison.
With 128 calibration examples, RTN-plus-bias alone reaches $0.860$ and
$0.616$ of uncorrected RTN error at three and four bits.  The paired
centered-plus-bias/RTN-plus-bias ratios are $0.869$ and $0.667$, respectively.
At four bits the raw split-median MSEs are $0.0652$ for RTN, $0.0447$ for
RTN-plus-bias, and $0.0307$ for centered-plus-bias.  Thus both the correction
and the change in rounded weights contribute to the measured reduction.

The mechanism is explicit.  For a weight-error column $e_j$, centering
drops $(\mu e_j)^2$ from the loss.  The raw-pixel input mean makes that
term substantial here, as the per-split decomposition in
Appendix~\ref{app:disc-static-protocol} shows.  Bias correction removes the mean
term on calibration data, and the RTN-plus-bias rows expose how much of
the held-out gain comes from this correction alone.

\begin{table}[tbp]
  \centering
  \small
  % Generated by experiments/reproduce.py; do not edit.
\begin{tabular}{rrlr}
\toprule
Bits & Cal. & Method & Held-out/RTN [95\% RI] \\
\midrule
$3$ & 128 & RTN & $1.000\,[1.000,1.000]$ \\
$3$ & 128 & RTN + bias & $0.860\,[0.830,0.871]$ \\
$3$ & 128 & Centered, fixed bias & $1.264\,[0.933,2.035]$ \\
$3$ & 128 & Centered + bias & $0.756\,[0.598,0.837]$ \\
$3$ & 128 & Uncentered & $0.842\,[0.692,0.929]$ \\
$3$ & full & RTN & $1.000\,[1.000,1.000]$ \\
$3$ & full & RTN + bias & $0.858\,[0.829,0.870]$ \\
$3$ & full & Centered, fixed bias & $1.250\,[0.884,2.027]$ \\
$3$ & full & Centered + bias & $0.730\,[0.575,0.812]$ \\
$3$ & full & Uncentered & $0.799\,[0.682,0.905]$ \\
$4$ & 128 & RTN & $1.000\,[1.000,1.000]$ \\
$4$ & 128 & RTN + bias & $0.616\,[0.375,0.738]$ \\
$4$ & 128 & Centered, fixed bias & $1.148\,[0.867,1.753]$ \\
$4$ & 128 & Centered + bias & $0.378\,[0.250,0.472]$ \\
$4$ & 128 & Uncentered & $0.478\,[0.319,0.581]$ \\
$4$ & full & RTN & $1.000\,[1.000,1.000]$ \\
$4$ & full & RTN + bias & $0.615\,[0.374,0.737]$ \\
$4$ & full & Centered, fixed bias & $1.260\,[0.832,1.835]$ \\
$4$ & full & Centered + bias & $0.351\,[0.237,0.462]$ \\
$4$ & full & Uncentered & $0.467\,[0.317,0.562]$ \\
\bottomrule
\end{tabular}

  \caption{Held-out product mean-squared-error (MSE) ratios on Digits,
  normalized by split-matched round-to-nearest (RTN) at the same precision.
  Entries are medians of \DiscrepancyStaticSplits{} split-level medians; the
  bracketed intervals are bootstrap 95\% resampling intervals (RI) over those
  five observed splits.  For calibration size 128, each split-level median summarizes
  \DiscrepancyStaticCalibrationSubsets{} subsets; ``full'' uses the complete
  calibration pool once per split.  RTN+bias and centered+bias use their
  own errors with the same calibration mean.}
  \label{tab:disc-static}
\end{table}
\fi

\begin{figure}[tbp]
  \centering
  \IfFileExists{figures/discrepancy/static_transfer.pdf}{%
    \includegraphics[width=0.96\linewidth]
      {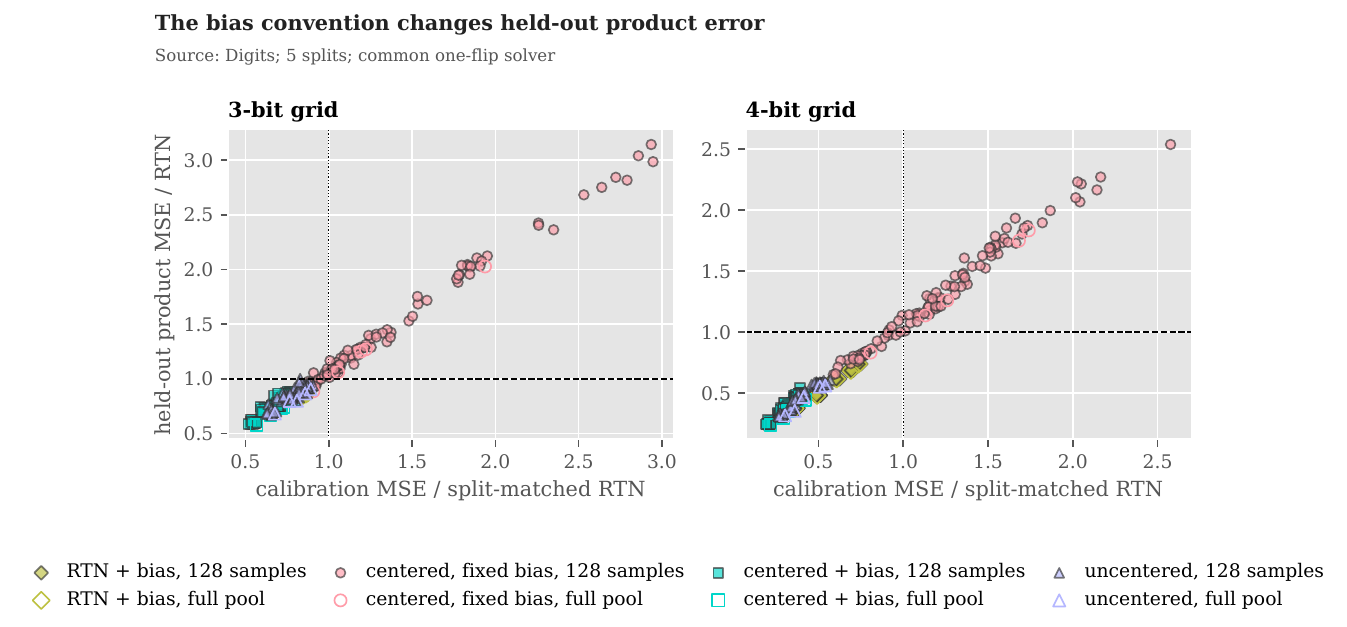}%
  }{}
  \caption{Generalization from calibration to held-out data across five Digits
  splits.  Both axes use the split-matched round-to-nearest (RTN) denominator.
  Marker shapes distinguish RTN-plus-bias, centered fixed-bias,
  centered-plus-bias, and uncentered rounding; filled markers show 128-example subsets and open
  markers show the full calibration pool.}
  \label{fig:disc-static-transfer}
\end{figure}

\ifdiscresults
\paragraph{Numerical validation.}
The direct product and quadratic losses agree to
\(\DiscrepancyStaticMaximumIdentityError\), and the centered-plus-bias
objective identity agrees to
\(\DiscrepancyStaticMaximumCenteredBiasObjectiveGap\).

\fi

\paragraph{Reproducibility.}
The public artifact provides the data, numerical checks, and protocol needed
to reproduce the reported tables and figures:
\url{https://github.com/piyush314/gauge-floors}.

\section{Related work}
\label{sec:disc-related}

Product-aware rounding connects two questions: how to balance vector errors,
and how to preserve a layer's outputs after quantization.
\Cref{tab:disc-positioning} distinguishes their targets and reuse constraints.

\begin{table}[!tbp]
\centering
\footnotesize
\setlength{\tabcolsep}{3pt}
\renewcommand{\arraystretch}{1.15}
\begin{tabular}{@{}p{0.15\linewidth}p{0.19\linewidth}p{0.14\linewidth}p{0.22\linewidth}p{0.23\linewidth}@{}}
\toprule
Method / role & Target & Loss or norm & Rounding choices & Reuse constraint \\
\midrule
Dynamic construction & One observed row--block product & Squared Euclidean & Fixed adjacent levels; forced clipping retained & One activation copy per output block \\
Static identity & Expected linear-layer output & Second-moment quadratic & Fixed columnwise sets & One weight matrix for future inputs \\
AdaRound \cite{adaround} & Calibration layer outputs & Reconstruction loss & Learned adjacent-level decisions & Weights reused across inputs \\
GPTQ \cite{gptq} & Calibration linear outputs & Gram-weighted quadratic & Grid rounding with sequential compensation & Weights reused across inputs \\
Linear discrepancy \cite{lsv} & Worst fractional target & Commonly $\ell_\infty$ & Binary vector & Guarantee over targets \\
Babai \cite{babai1986} & Nearest lattice point & Euclidean distance & Unrestricted integer basis coefficients & Fixed lattice basis; new target per query \\
\bottomrule
\end{tabular}
\caption{Comparison of mathematical targets and feasible sets.  The static
identity specifies a loss; AdaRound and GPTQ supply optimization procedures.
Babai's lattice coefficients require additional constraints to represent the
adjacent-level choices of this paper.}
\label{tab:disc-positioning}
\end{table}

\paragraph{Balancing the product image.}
The dynamic objective is fixed-target Euclidean discrepancy: the observed
offset fractions specify a target, and the rounded entries specify a binary
vector whose image should approach it.  Linear discrepancy instead maximizes
over fractional targets \cite{lsv}.  Our reduction uses linear dependence
to move within the box until another coordinate becomes integral, the
geometry behind iterated-rounding arguments such as Beck--Fiala
\cite{beckfiala1981}.  Conditional expectation then selects endpoints
without increasing the expected squared error
\cite{raghavan1988probabilistic}.  Dependent rounding additionally studies
prescribed marginals and negative correlation \cite{gandhi2006dependent}.

Directional control asks for more than one Euclidean error bound.  For
vectors of bounded norm, the Gaussian-measure result of Banaszczyk gives
signed sums lying in a suitably scaled convex body \cite{banaszczyk}; the
Gram--Schmidt walk constructs such signs \cite{gramschmidt}.
Appendix~\ref{sec:disc-extensions} specializes its directional control to
product errors and tests the spectral-tail condition needed for reuse.

\paragraph{Learning reusable weights.}
Activation-weighted reconstruction coordinates weight decisions using
calibration inputs.  AdaRound optimizes the decisions through a continuous
relaxation, and GPTQ propagates each rounding error to weights still to be
rounded \cite{adaround,gptq}.  Analyses of GPTQ identify its updates with
nearest-plane search under stated orientation and no-clipping assumptions
\cite{gptqgeometry,birnick2025lattice}.  DiscQuant also uses discrepancy,
with guarantees based on approximately low-rank gradient structure
\cite{discquant}.  Our static identity explains which quadratic metric
matches the bias convention of a linear layer; it can therefore guide the
loss used by a reconstruction procedure.

\paragraph{Bounded closest-point search.}
The dynamic problem asks for the point of
$\{V^\top\chi:\chi\in\{0,1\}^{|U|}\}$ nearest to
$V^\top\theta-g^{\mathrm{fix}}$.  For rational data, clearing denominators
embeds this finite set in a rank-$r$ lattice.  Arbitrary real rows need not
generate a discrete lattice.  Even in the rational case, a lattice search
may return an integer combination with coefficients outside $\{0,1\}$,
which is inadmissible for the fixed adjacent-level grids.

Our certificate compares directly with the bounded binary optimum and
measures the additive squared-error cost using rank and the original
step-weighted row norms.  Lattice reduction followed by Babai's
nearest-plane procedure instead approximates unrestricted closest-point
distance in terms of the reduced basis and dimension
\cite{lenstra1982factor,babai1986}; the two guarantees address different
feasible sets.  Constrained tree search can enforce the binary choices at a combinatorial
search cost; sphere decoding offers a related closest-point strategy
\cite{agrell2002closest}.

\paragraph{Changing the quantization geometry.}
Sequential noise shaping feeds earlier errors into later decisions to
redistribute error through a filter \cite{daubechiesdevore2003,floydsteinberg1976}.
Transform methods change the representation before rounding: WUSH designs
block transforms for weights and activations \cite{wush}, and direct
matrix-product quantization co-designs representations and distortion
\cite{ordentlich2024optimal}.  Scalar level-density design offers another
choice before rounding \cite{ang2026scalar}.
For independent dither, product-error accounting and product-preserving
changes of basis lead to nuclear-norm noise bounds
\cite{qmm,nuclearcompanion}.  These representation choices determine the rows
and step sizes supplied to the fixed-grid problem studied here.

\section{Limitations and open problems}
\label{sec:disc-discussion}

The dynamic construction computes, for each observed row and block, a
rounding whose error is provably within $rR^2/4$ of the best admissible
choice.  Deployment must recover that accuracy within a matrix kernel's
reuse and time budgets.  The static identity already respects weight reuse,
but its empirical metric must transfer from calibration to future inputs.

\paragraph{Rounding cost and tile reuse.}
Once the rank is known, a direct null-space implementation computes a
direction on $r+1$ active coordinates at each step.  Recomputing its singular value decomposition costs
$O((|U|-r)(pr^2+r^3))$ arithmetic operations; completing the final choices
costs $O((|U|+r)p)$.  Computing the rank initially by SVD adds
$O(|U|p\min\{|U|,p\})$ work.  A clipping-aware start also pays for the box-constrained
quadratic solve.  The underlying row--block product costs $O(Kp)$.
Thus the polynomial-time guarantee concerns accuracy, not throughput.
Appendix~\ref{sec:disc-algorithms} relates measured product drift to the
exact-arithmetic certificate, and \Cref{sec:disc-exp-dynamic} reports the
implementation and timing protocol.

For $B$ output blocks per row, charging the whole rounding cost
$C_{\mathrm{round}}$ to each block gives $O(B C_{\mathrm{round}})$ work.
Materializing all activation copies takes $O(BK)$ codes per row; streaming
one copy at a time reduces live storage to $O(K)$ but retains repeated
rounding and tile movement.  An optimized implementation would need to
reuse block factorizations, update null directions incrementally, or find
one rounded tile that controls several block images.  The timing study
therefore establishes the computational cost that such a kernel must
amortize; kernel-level acceleration is outside the present study.

\paragraph{Calibration and evaluation scope.}
With a known input distribution, the static loss identity is exact.
An empirical second moment estimates that loss, and a change in the input
mean can change which rounding pattern minimizes it.  Held-out product error
therefore tests transfer beyond the calibration objective.  The Digits study
isolates this mean-error mechanism in a small fitted linear model using one
local-search start.  Its matched-bias comparisons and stopping checks
support that diagnostic; model-wide accuracy and speed require experiments
on frozen transformer projections and complete quantized networks.
Joint optimization of scales, clipping, or transforms would also change the
feasible set, as described in \Cref{sec:disc-static}.

\paragraph{Additive approximation: what is already determined.}
Write $\mathcal D(V)=\norm{V}_F^2/12$ for the unit-step dither scale.
The sufficient effective-row condition is not necessary for an approximation
scheme: \Cref{cor:disc-heavy-enumeration} gives error
$\operatorname{OPT}_{\mathrm{dyn}}+\varepsilon\mathcal D(V)+\eta$
in $2^{\lceil3r/\varepsilon\rceil}$ times polynomial work by enumerating
rows with large energy.  This holds for arbitrary fractional targets and
fixed offsets.

Unrestricted rank rules out such a guarantee in polynomial time for every
fixed $\varepsilon>0$, unless $\mathrm P=\mathrm{NP}$.  Indeed, the
incidence construction in \Cref{thm:disc-maxcut} has
$\mathcal D(H)=|E|/6$ and loss $|E|-\operatorname{cut}$.
An additive error of $\varepsilon\mathcal D(H)$ would give a cut at least
$(1-\varepsilon/3)$ times optimal, since an optimal cut contains at least
$|E|/2$ edges.  Taking $\varepsilon<3/17$ contradicts the $16/17$
approximation barrier for \textsc{Max-Cut} \cite{hastad2001optimal}.

At rank one with $\theta=\frac12\mathbf1$ and no fixed offset, all
coordinates must receive a
sign, giving the number-partitioning problem.  The small residuals analyzed
for Karmarkar--Karp concern random inputs \cite{karmarkarkarp1983}.
Worst-case number balancing with coefficients in $\{-1,0,1\}$ may leave
coordinates unused \cite{hoberg2017balancing}; this is a different feasible
set.  For example, an odd number of unit coefficients has full-sign residual
at least one, while two opposite signs and zeros elsewhere cancel exactly.
These results do not settle arbitrary-target rounding with all coordinates
assigned.  The remaining questions concern computational parameters, reuse,
and the output distribution of a rounding rule.

\begin{openproblem}[Precision-aware parameterization]
\label{op:disc-parameterization}
\Cref{thm:disc-partition} rules out even XP algorithms for the zero-error
decision problem under the rank parameter, and hence also rank-only
fixed-parameter tractability, unless $\mathrm{P}=\mathrm{NP}$.  Which
combinations of rank, coefficient magnitude, target precision, or a promised
nonzero gap admit pseudo-polynomial algorithms or approximation schemes?
\end{openproblem}

\begin{openproblem}[Reusable dynamic rounding]
\label{op:disc-reuse}
Which joint structure of the output blocks lets one rounded row improve on
the dither scale for every block while preserving activation-tile reuse?
Separate low ranks and large effective row counts are insufficient.
For example, let the $K$ blocks be the scalar-output columns of a
$K\times K$ Hadamard matrix $H$, with $H H^\top=K I$ and
$\theta=\frac12\mathbf1$.  Every block has rank one and
$K_{\mathrm{eff}}=K$, yet every shared sign vector $s$ satisfies
$\|H^\top s/2\|_2^2=K^2/4$, three times the sum of the block dither scales.
At least one block must exceed its reference.
A useful guarantee must therefore account for block count and joint image
geometry as well as the separate block ranks.  The directional bounds in
Appendix~\ref{sec:disc-extensions} provide one way to express that joint
control.
\end{openproblem}

\begin{openproblem}[Verifiable spectral tails]
\label{op:disc-tail}
Can a polynomial-time rounding rule guarantee both the head bound
\eqref{eq:disc-head} and a nontrivial tail error-second-moment bound of the
form \eqref{eq:disc-tail-assumption}?  The diagnostic in
Appendix~\ref{sec:disc-exp-gsw-tail} places every finite-draw Gram--Schmidt estimate
above both dither and the exact independent same-grid control.  Determining
a valid population tail scale for the same rounding walk would make the
spectral estimate applicable to that output law.
\end{openproblem}

\section{Conclusion}
\label{sec:disc-conclusion}

Fixed-grid rounding can be designed around the output in which scalar
errors combine.  For an observed row and block, directions that leave the
product unchanged resolve most choices before any completion error is
paid.  For reusable weights, the deployed bias convention determines whether
the calibration loss must retain the input mean.  The experiments expose
both mechanisms and compare the resulting errors with independent rounding
on the same grids.

The next practical step is to preserve these cancellations in a rounded
activation tile shared across output blocks, accounting jointly for rounding
time, tile movement, and product error.  The present results establish when
coordinated rounding can provably reduce product error and identify the reuse
constraints that such an implementation must preserve.

\section*{Acknowledgments}

This work was supported by the 2026 Laboratory Directed Research and
Development (LDRD) FORSEE initiative, ``CCSD Core: Foundational Research for
Smart Extreme-scale Ecosystems,'' at Oak Ridge National Laboratory.

Generative AI tools were used for LaTeX formatting, copy-editing, figure
generation, and assistance with scripts for numerical checks. They
were also used to help locate and organize related literature. All theorems,
proofs, citations, and claims of novelty were checked by the author, who
takes full responsibility for the content of this paper.

\printbibliography[title={References}]

\appendix
\section{Extensions toward reusable rounding}
\label{sec:disc-extensions}

A reusable rounded row must control several outputs at once.  A randomized
construction can supply directional concentration for that purpose, while a
separate spectral-tail estimate addresses the error omitted by a low-rank
metric.  The diagnostic below tests the tail assumption and finds empirical
operators above both independent-rounding references.

\subsection{A randomized Gram--Schmidt guarantee}

For reuse, the useful guarantee is that one rounding draw controls many
specified directions.  The Gram--Schmidt walk provides
\emph{sub-Gaussian tails}: deviation probabilities decay exponentially in
the squared error in each direction.  It rounds directly from the fractional
start, replacing the null-space reduction.

\begin{corollary}[Gram--Schmidt walk from an arbitrary fractional start]
\label{cor:disc-gs}
Let $V\in\R^{|U|\times p}$ have rank $r$ and rows $v_k^\top$ satisfying
$\norm{v_k}_2\le R$.  For every $\theta\in[0,1]^{|U|}$ and
$\delta\in(0,1)$, the randomized polynomial-time Gram--Schmidt walk returns
$\chi\in\{0,1\}^{|U|}$ such that, with probability at least $1-\delta$,
\begin{equation}
  \norm*{\sum_{k\in U}(\chi_k-\theta_k)v_k}_2
  \le
  2\sqrt{20}\,R
  \sqrt{r\log 5+\log(2/\delta)} .
  \label{eq:disc-gs-high-probability}
\end{equation}
It also satisfies
\begin{equation}
  \E\norm*{\sum_{k\in U}(\chi_k-\theta_k)v_k}_2^2
  \le 10R^2r .
  \label{eq:disc-gs-second-moment}
\end{equation}
These bounds control the active contribution.  An affine fixed term
$g^{\mathrm{fix}}$ can instead be included by starting the Gram--Schmidt walk
at the continuous-relaxation optimum, as in \Cref{thm:disc-additive}; the
corresponding expected additive term is $10R^2r$.
\end{corollary}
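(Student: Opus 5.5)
We would deduce the corollary from the Gram--Schmidt walk theorem of Bansal, Dadush, Garg and Lovett \cite{gramschmidt}, in its version for arbitrary fractional starting points, after two reductions. The first reduction changes variables so that the walk's sign coordinates match the binary choices. Set $x_k=2\theta_k-1\in[-1,1]$ and run the walk from $x$ on the vectors $v_k$. The walk returns $s\in\{-1,1\}^{|U|}$. We then set $\chi_k=(s_k+1)/2$, which gives $\chi_k-\theta_k=(s_k-x_k)/2$. Hence the active error equals $\tfrac12 V^\top(s-x)$.

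The cited theorem states that $V^\top(s-x)$ is $\sigma$-sub-Gaussian with $\sigma^2\le 40R^2$. This means $\E\exp(\langle u,V^\top(s-x)\rangle)\le\exp(\sigma^2\|u\|_2^2/2)$ for every $u\in\R^p$. The constant is $40$ for general starts; it would be $1$ only for a start at the origin. After the factor $\tfrac12$, the active error $Y$ is sub-Gaussian with parameter at most $20R^2$ in every direction. The second reduction restricts to the relevant subspace. Every $v_k$ lies in $\mathrm{col}(V^\top)$, which has dimension $r$, so $Y$ takes values in an $r$-dimensional subspace. We therefore work in $\R^r$ with an orthonormal basis of that subspace, and the norms are unchanged.

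\textbf{High-probability bound.} We use a standard $1/2$-net argument on the unit sphere of this $r$-dimensional subspace. Choose a net $\mathcal N$ with $|\mathcal N|\le 5^r$. For any vector $y$ in the subspace, $\|y\|_2\le 2\max_{u\in\mathcal N}\langle u,y\rangle$. For each fixed $u$, the sub-Gaussian bound gives
\[
\Pr\bigl(\langle u,Y\rangle>t\bigr)\le\exp\bigl(-t^2/(40R^2)\bigr).
\]
A union bound over $\mathcal N$ with $t=\sqrt{40}\,R\sqrt{r\log5+\log(1/\delta)}$ gives $\|Y\|_2\le 2t$ with probability at least $1-\delta$. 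The stated bound has $2\sqrt{20}$ and $\log(2/\delta)$ in place of these constants. This matches the bookkeeping of a two-sided tail bound, $2\exp(-t^2/(2\sigma^2))$ with $\sigma^2=20R^2$. We would reconcile the constants exactly with that convention. Either version yields the displayed bound \eqref{eq:disc-gs-high-probability}.

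\textbf{Second moment.} We sum the directional second moments over an orthonormal basis $b_1,\dots,b_r$ of the subspace. A $\sigma$-sub-Gaussian scalar has variance at most $\sigma^2$. This follows by expanding the moment generating function bound to second order, using $\E\langle b_i,Y\rangle=0$. The mean is zero because the walk's coordinates are martingales, so $\E s=x$. Therefore
\[
\E\|Y\|_2^2=\sum_{i=1}^r\E\langle b_i,Y\rangle^2\le 20R^2r.
\]
This exceeds the claimed $10R^2r$ by a factor of two, so a sharper input is needed.

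\textbf{Main obstacle.} The hardest step is obtaining the constant $10$. We would first try the covariance bound of the walk itself rather than the moment generating function. The walk analysis shows that $\E[(s-x)(s-x)^\top]\preceq$ a projection-type operator under which $V^\top\mathrm{Cov}(s)V\preceq c\,R^2 I$ with small $c$. Concretely, the martingale increments $\delta_t u_t$ satisfy $\E\delta_t^2\,\|V^\top u_t\|^2\le(\text{decrease of } \sum_k(1-x_k^2))\cdot R^2$ per direction. A constant of $40/4=10$ after the factor $\tfrac12$ would follow if the same covariance bound holds with $40$ replaced by $4\cdot10$. If this cannot be extracted, we would state the weaker constant $20R^2r$.

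\textbf{Affine term.} We start the walk at the continuous-relaxation optimum $x^0$ of \eqref{eq:disc-relaxation}. Then the error decomposes as $g'+Y$ with $g'=g^{\mathrm{fix}}+V^\top(x^0-\theta)$ deterministic. Since $\E Y=0$, the cross term vanishes and
\[
\E\mathcal L(\chi)=\mathcal L(x^0)+\E\|Y\|_2^2\le\mathrm{OPT}_{\mathrm{dyn}}+10R^2r,
\]
exactly as in the proof of \Cref{thm:disc-additive}.
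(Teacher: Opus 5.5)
Your route is the paper's own: the substitution $x^0=2\theta-\mathbf 1$ and $\chi=(s+\mathbf 1)/2$, Theorem~1.4 of \cite{gramschmidt} applied to $v_k/R$, restriction to the $r$-dimensional row span, a $1/2$-net with at most $5^r$ points, directional second moments summed over an orthonormal basis, and the mean-zero cross term for the affine start. The gap is a scaling error in the one step that fixes the constants. Multiplying a vector by $\tfrac12$ multiplies its sub-Gaussian parameter $\sigma$ by $\tfrac12$, hence $\sigma^2$ by $\tfrac14$, not $\tfrac12$:
\[
  \E\exp\langle u,Y\rangle
  =\E\exp\bigl\langle \tfrac12u,\,V^\top(s-x)\bigr\rangle
  \le\exp\bigl(40R^2\norm{u}_2^2/8\bigr)
  =\exp\bigl(10R^2\norm{u}_2^2/2\bigr).
\]
So $Y$ is $\sqrt{10}\,R$-sub-Gaussian, not $\sqrt{20}\,R$. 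Because you carried $\sigma^2=20R^2$, your second-moment argument only gives $20R^2r$, and your ``main obstacle'' is an artifact of the slip. Your obstacle paragraph already contains the fix. The directional variance bound $\E\langle u,V^\top(s-x)\rangle^2\le40R^2\norm{u}_2^2$ follows at once from the $\sqrt{40}R$-sub-Gaussian statement you cite, and the factor $\tfrac12$ divides it by four. No separate covariance analysis of the walk is needed, and the sketch you give there is not a proof in any case.

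The same slip breaks your reconciliation of the tail bound. From $\sigma^2=20R^2$ you get $\norm{Y}_2\le2\sqrt{40}\,R\sqrt{r\log5+\log(1/\delta)}$. This does not give \eqref{eq:disc-gs-high-probability}: the implication would require $r\log5+\log(1/\delta)\le\log2$, which fails for every $r\ge1$. So ``either version yields the displayed bound'' is false.

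With the corrected $\sigma^2=10R^2$, everything closes as in the paper. Each basis direction has second moment at most $10R^2$, which gives \eqref{eq:disc-gs-second-moment}. For the tail, combine the two-sided bound $2\exp(-t^2/(20R^2))$ with a union over $5^r$ net points and take $t=\sqrt{20}\,R\sqrt{r\log5+\log(2/\delta)}$; this gives exactly \eqref{eq:disc-gs-high-probability}. Your one-sided net inequality $\norm{y}_2\le2\max_{u\in\mathcal N}\langle u,y\rangle$ is valid, so with the corrected parameter you would even obtain $\log(1/\delta)$ in place of $\log(2/\delta)$. The affine argument is correct as written.
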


The proof appears in Appendix~\ref{app:disc-gs-proof}.

The expected Euclidean certificate is $40$ times the deterministic
certificate.  The additional information is directional: for any $B$
prescribed unit vectors $u_1,\ldots,u_B$, a union bound gives
\[
  \max_{b\le B}|\langle u_b,V^\top(\chi-\theta)\rangle|
  \le R\sqrt{20\log(2B/\delta)}
\]
with probability at least $1-\delta$.  Thus one draw can certify several
specified projections with logarithmic dependence on their count.
Turning this control into a shared rounding for output blocks is the reuse
question in \Cref{op:disc-reuse}.

\subsection{A conditional spectral head--tail estimate}

A low-rank metric reduces the rounding dimension by retaining its leading
eigenspace.  The full loss also includes the discarded directions.  Bounding
the rounding error's second moment in those directions accounts for this
remaining contribution.

\begin{remark}[Conditional spectral head--tail estimate]
\label{rem:disc-head-tail}
Let every coordinate be active, with $e^{\mathrm{fix}}=0$ and
$D=\Delta I$.  Write the rank-$r$ truncation of the static metric as
$M_r=\Phi_r\Phi_r^\top$ and the remainder as $M_{>r}=M-M_r$.  Let
$\beta_{\max,r}=\max_k\norm{(\Phi_r)_{k,:}}_2$ be the largest row norm of the
head factor.  Applying the Gram--Schmidt walk to the rows of $\Delta\Phi_r$
gives
\begin{equation}
  \E[e^\top M_re]
  \le 10\Delta^2r\beta_{\max,r}^2 .
  \label{eq:disc-head}
\end{equation}

To control the tail, let $\Pi_{>r}$ project onto the remaining eigenspace and
let $\Sigma_e=\E[ee^\top]$ be the error second moment.  If
\begin{equation}
  \Pi_{>r}\Sigma_e\Pi_{>r}
  \preceq \sigma_{\mathrm{tail}}^2\Pi_{>r}
  \label{eq:disc-tail-assumption}
\end{equation}
holds, then combining the head bound with the tail hypothesis gives
\begin{equation}
  \E[e^\top Me]
  \le
  10\Delta^2r\beta_{\max,r}^2
  +
  \sigma_{\mathrm{tail}}^2\operatorname{tr}(M_{>r}).
  \label{eq:disc-head-tail}
\end{equation}

The proof of \eqref{eq:disc-head-tail} appears in
Appendix~\ref{app:disc-head-tail-proof}.

Condition \eqref{eq:disc-tail-assumption} bounds error energy in every tail
direction.  It must hold for the same Gram--Schmidt output used in
\eqref{eq:disc-head}.  Independent zero-mean errors with per-coordinate second moment
$\sigma_{\mathrm{tail}}^2$ satisfy this isotropy condition.  Independent
dither has scale $\Delta^2/12$; applying that scale to the Gram--Schmidt
output requires a separate bound for its own projected second moment.
The diagnostic in Appendix~\ref{sec:disc-exp-gsw-tail} estimates that operator from
finite draws, while \eqref{eq:disc-tail-assumption} concerns the population
second moment.
\end{remark}

\subsection{Spectral-tail diagnostic: estimates exceed the dither scale}
\label{sec:disc-exp-gsw-tail}

The exact-rank blocks of \Cref{sec:disc-exp-dynamic} have zero spectral
tail.  This diagnostic instead uses full-rank metrics and tests whether the
same output that satisfies the head guarantee also has tail operator
$\lambda_{\max}(\Pi_{>r}\Sigma_e\Pi_{>r})$ at the dither scale
$\Delta^2/12$.

\paragraph{Design.}
We set $K=\ifgswtailresults\DiscrepancyGSWTailK\else32\fi$ and use head ranks
$r\in\{2,4,8,16\}$.  The spectra are flat, polynomial
($\lambda_j=1/j$), and geometric ($\lambda_j=0.75^{j-1}$).  For each
eigenbasis and rank, we draw one fixed fractional target with coordinates uniform
on $[0,1]$ and reuse that paired target across the three spectra.  From each
fixed metric and fractional target, we draw
\ifgswtailresults\DiscrepancyGSWTailDraws\else1024\fi{} independent outputs of
Algorithm~1 in \cite{gramschmidt}, form the empirical error second moment, and
report its tail operator divided by $\Delta^2/12$.  For the same eigenbasis and
target, we compute the analytic operator for independent same-grid Bernoulli
rounding and a draw-matched sampled control.  Estimates from the first 256,
512, and 1,024 outputs track convergence with
draw count.  The full protocol and numerical
validation appear in
Appendix~\ref{app:disc-gsw-tail-protocol}.

\paragraph{Results.}
\ifgswtailresults
The empirical tail operators exceed both the dither scale and the exact
same-grid Bernoulli control in every tested configuration
(\Cref{fig:disc-gsw-tail}).  The diagnostic therefore identifies the tail
operator, alongside the head error, as a concrete quantity that any
reusable-rounding theory must control (\Cref{op:disc-tail}): neither
reference scale can serve as $\sigma_{\mathrm{tail}}^2$ in
\eqref{eq:disc-head-tail}, and the head guarantee \eqref{eq:disc-head}
covers only the retained directions.  Across the
four-instance medians, the operator ratio ranges from
\DiscrepancyGSWTailMinimumMedianRatio{} to
\DiscrepancyGSWTailMaximumMedianRatio{}.  The analytic same-grid independent
control already ranges from
\DiscrepancyGSWTailMinimumAnalyticBernoulliRatio{} to
\DiscrepancyGSWTailMaximumAnalyticBernoulliRatio{} times the dither variance,
so part of the dither gap comes from the adjacent-level marginals rather than
Gram--Schmidt dependence.  After division by that control, the four-instance
median Gram--Schmidt ratios range from
\DiscrepancyGSWTailMinimumMedianOverAnalyticBernoulli{} to
\DiscrepancyGSWTailMaximumMedianOverAnalyticBernoulli{}.  Replacing the exact
control by its draw-matched empirical estimate gives median ratios from
\DiscrepancyGSWTailMinimumMedianOverSampledBernoulli{} to
\DiscrepancyGSWTailMaximumMedianOverSampledBernoulli{}, providing a
draw-count-matched check on finite-sample operator inflation.  The relative
change from 512 to 1,024 Gram--Schmidt draws is
\DiscrepancyGSWTailMedianHalfToFullChangePercent\% at the median and
\DiscrepancyGSWTailMaximumHalfToFullChangePercent\% at maximum across
configurations.
\fi

\begin{figure}[H]
  \centering
  \IfFileExists{figures/discrepancy/gsw_tail_covariance.pdf}{%
    \includegraphics[width=0.52\linewidth]
      {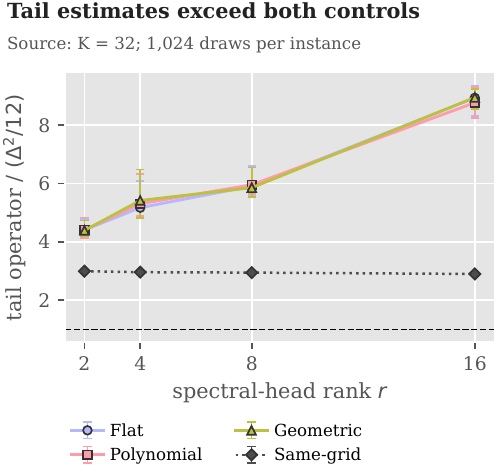}%
  }{}
  \caption{Tail operators divided by the dither variance.  Colored markers are
  empirical Gram--Schmidt medians for flat ($\lambda_j=1$), polynomial
  ($\lambda_j=1/j$), and geometric ($\lambda_j=0.75^{j-1}$) spectra.
  The gray diamonds give the analytic independent
  same-grid Bernoulli control for the same eigenbases and fractional targets.
  Bars span interquartile ranges across four paired instances, and each
  Gram--Schmidt estimate uses 1,024 outputs.  The horizontal dashed line is the
  dither scale $\Delta^2/12$.}
  \label{fig:disc-gsw-tail}
\end{figure}

\section{Algorithmic details and deferred proofs}
\label{sec:disc-algorithms}

Coordinates are ordered by their original indices, and ties are resolved in
favor of the
smaller index or lower endpoint.

\paragraph{Algorithm 1: null-space reduction.}
Given $V$ and $x^{\mathrm{init}}\in[0,1]^{|U|}$, set
$x\gets x^{\mathrm{init}}$ and repeat:
\begin{enumerate}
\item Let $J=\{k:0<x_k<1\}$.  If $|J|\le\operatorname{rank}(V)$, return $x$.
\item Let $T$ be the $r+1$ smallest-index coordinates in $J$, where
  $r=\operatorname{rank}(V)$.  Row-reduce $V_T^\top$ using the coordinate
  order on $T$.  Select the
  smallest-index free variable, set it to one and the other free variables to
  zero, solve for the pivot variables, and extend the resulting nonzero null
  vector $h$ with zeros outside $T$.  If needed, change its sign so the first
  nonzero entry is positive.
\item Compute the largest $\alpha>0$ for which $x+\alpha h\in[0,1]^{|U|}$:
  \[
    \alpha
    =
    \min\left\{
      \frac{1-x_k}{h_k}:h_k>0,\quad
      \frac{-x_k}{h_k}:h_k<0
    \right\}.
  \]
  Set $x\gets x+\alpha h$.  At least one additional coordinate becomes
  integral, while $V^\top x$ remains unchanged.
\end{enumerate}
For floating-point experiments, row reduction is replaced by an SVD of the
active submatrix, with the tolerance recorded in the artifact.

\paragraph{Finite-precision product drift.}
The reduction's numerical error can be checked in output space.  Let
$\widetilde x$ be its start and $\bar x\in[0,1]^{|U|}$ its computed
endpoint, and measure
$\delta=\|V^\top(\bar x-\widetilde x)\|_2$.  The triangle inequality gives
\[
  \sqrt{\mathcal L(\bar x)}
  \le\sqrt{\mathcal L(\widetilde x)}+\delta.
\]
Applying the exact completion rule to $\bar x$, with fractional set $S$,
therefore yields the a posteriori bound
\begin{equation}
  \mathcal L(\chi)
  \le\mathcal L(\widetilde x)
     +2\delta\sqrt{\mathcal L(\widetilde x)}+\delta^2
     +\frac14\sum_{k\in S}\|v_k\|_2^2.
  \label{eq:disc-drift}
\end{equation}
If $|S|\le r$, the last term is at most $rR^2/4$.
For computed steps $\alpha_t h_t$, local residuals satisfy
$\delta\le\sum_t|\alpha_t|\|V^\top h_t\|_2$ plus the product-image effect
of arithmetic and endpoint snapping errors.  Measuring $\delta$ directly
includes all these effects.  This is a check on the computed path; a forward
stability guarantee would additionally need to control the residuals and step
lengths.  The experimental validation also recomputes the final objective
to check finite-precision completion decisions.  These measured norms are
floating-point diagnostics; a rigorous numerical certificate would also
bound the rounding error in evaluating them.

\paragraph{Algorithm 2: conditional-expectation completion.}
Given the endpoint $x^\star$ from Algorithm~1, let $S$ be its fractional set.
Copy its integral coordinates to $\chi$, and initialize
$y\gets g^{\mathrm{fix}}+V^\top(x^\star-\theta)$.  Process $S$ in coordinate
order.  At coordinate $k$, form
\[
  y_0=y-x_k^\star v_k,
  \qquad
  y_1=y+(1-x_k^\star)v_k .
\]
The still-random coordinates contribute the same remaining variance to both
conditional expectations, so choose $\chi_k=0$ when
$\norm{y_0}_2^2\le\norm{y_1}_2^2$ and $\chi_k=1$ otherwise.
Replace $y$ by the selected vector.  The final $\chi$ satisfies the guarantee in
\Cref{lem:disc-ce}.

\paragraph{Algorithm 3: affine-aware additive approximation.}
Find a feasible $\widetilde x\in[0,1]^{|U|}$ satisfying
\[
  \mathcal L(\widetilde x)
  \le \min_{x\in[0,1]^{|U|}}\mathcal L(x)+\eta,
  \qquad
  \mathcal L(x)=\norm{g^{\mathrm{fix}}+V^\top(x-\theta)}_2^2.
\]
Apply Algorithm~1 starting from $\widetilde x$, then apply Algorithm~2 to the
resulting fractional endpoint.  This procedure achieves the guarantee in
\Cref{thm:disc-additive}, with an additional $+\eta$ when the continuous
relaxation is solved approximately.

\paragraph{Algorithm 4: one-flip metric descent.}
For each static-factor column, initialize at round-to-nearest.  With current
error $e$, metric $M$, and gradient $q=Me$, evaluate every active coordinate's
exact flip increment
\[
  \delta_i=2s_iq_i+s_i^2M_{ii},
\]
where $s_i$ is the change from the current adjacent level to the other one.
Flip the coordinate with the most negative $\delta_i$, breaking ties by
smallest index, update $e$ and $q$ exactly, and rescan.  Stop when all
$\delta_i$ are nonnegative up to the recorded floating-point tolerance.  Each
accepted step strictly decreases the objective over a finite feasible set, so
the procedure terminates at a one-flip local optimum.

\subsection{Proof of the Gram--Schmidt corollary}
\label{app:disc-gs-proof}

\begin{proof}[Proof of \Cref{cor:disc-gs}]
If $R=0$, both conclusions are immediate.  Otherwise apply the
Gram--Schmidt walk in the $r$-dimensional row span.  Theorem~1.4 of
\cite{gramschmidt} states that, for vectors of norm at most one and an
arbitrary start $x^0\in[-1,1]^{|U|}$, the walk returns
$x\in\{-1,1\}^{|U|}$ for which
$\sum_k(x_k-x_k^0)v_k$ is $\sqrt{40}$-sub-Gaussian.  Here a random vector
$Z$ is $\sigma$-sub-Gaussian when
$\E\exp\langle u,Z\rangle\le
\exp(\sigma^2\norm{u}_2^2/2)$ for every $u$.
Apply that theorem to $v_k/R$ from
$x^0=2\theta-\mathbf1$, and set $\chi=(x+\mathbf1)/2$.  The factor $1/2$
makes
\[
  Z=\sum_k(\chi_k-\theta_k)v_k
\]
$\sqrt{10}R$-sub-Gaussian.  A $1/2$-net of the unit sphere in the
row span\footnote{A $1/2$-net is a finite set on the sphere such that every
point of the sphere is within Euclidean distance $1/2$ of a point in the set.}
has at most $5^r$ points and satisfies
$\norm{Z}_2\le2\max_u|\langle u,Z\rangle|$.  The directional tail bound and a
union bound give \eqref{eq:disc-gs-high-probability}.  Summing directional
second moments over an orthonormal basis of the row span gives
\eqref{eq:disc-gs-second-moment}.  These bounds are an algorithmic
vector-balancing consequence of the Gaussian-measure method \cite{banaszczyk}.

For a start $y\in[0,1]^{|U|}$, write
$g_y=g^{\mathrm{fix}}+V^\top(y-\theta)$.  By the same theorem,
$Z=V^\top(\chi-y)$ has mean zero and satisfies
$\E\norm{Z}_2^2\le10R^2r$.  Hence
$\E\norm{g_y+Z}_2^2\le \mathcal L(y)+10R^2r$.
Choosing the continuous-relaxation optimum for $y$ proves the affine statement.
\end{proof}

\subsection{Proof of the conditional head--tail bound}
\label{app:disc-head-tail-proof}

\begin{proof}[Proof of \Cref{rem:disc-head-tail}]
Equation \eqref{eq:disc-head} follows by applying
\eqref{eq:disc-gs-second-moment} in the head eigenspace.  Orthogonality
of the head and tail decomposes the expected quadratic form.  For the tail,
\[
  \E[e^\top M_{>r}e]
  =\operatorname{tr}(M_{>r}\Sigma_e)
  =\operatorname{tr}\!\left(
    M_{>r}\Pi_{>r}\Sigma_e\Pi_{>r}
  \right)
  \le \sigma_{\mathrm{tail}}^2\operatorname{tr}(M_{>r}).
\]
Combining this tail bound with \eqref{eq:disc-head} proves
\eqref{eq:disc-head-tail}.
\end{proof}

\section{Gram--Schmidt spectral-tail protocol}
\label{app:disc-gsw-tail-protocol}

This diagnostic implements Algorithm~1 of \cite{gramschmidt}.  At each step,
the largest alive index is the pivot.  A least-squares solve projects its
vector orthogonally to the span of the other alive vectors, and the two
boundary step lengths are sampled with the published mean-zero
probabilities.

\paragraph{Numerical validation.}
We independently reconstruct one update by a singular value decomposition,
verify that each draw terminates at a sign vector inside the cube, and check
the prescribed marginals from arbitrary fractional starts.  We also record
an orthogonality residual at every update; its maximum is
\ifgswtailresults\(\DiscrepancyGSWTailMaximumOrthogonalityError\)\else
\(1.3\times10^{-11}\)\fi.

\paragraph{Instances and measurement.}
For each of four seeded Haar eigenbases in dimension 32, we draw one fractional
target with independent uniform coordinates for every head rank
$r\in\{2,4,8,16\}$.  The target vector remains fixed across repeated walk outputs
and is shared across the three spectral profiles at the same eigenbasis and
rank.
The three eigenvalue sequences are $\lambda_j=1$, $\lambda_j=1/j$, and
$\lambda_j=0.75^{j-1}$.  With $U_{>r}$ denoting the tail eigenvectors, the
experiment uses the empirical second moment
\[
  \widehat\Sigma_{>r}
  =\frac1T\sum_{t=1}^T
    (U_{>r}^{\top}e_t)(U_{>r}^{\top}e_t)^{\top},
  \qquad T=1024,
\]
and reports $\lambda_{\max}(\widehat\Sigma_{>r})/(\Delta^2/12)$.  This is an
empirical second moment over the walk's internal randomness for a fixed metric
and fixed fractional target, matching the distribution required by
\eqref{eq:disc-tail-assumption}.

Two paired controls separate the choice of adjacent-level marginals from the
walk's dependence.  Independent same-grid rounding has the exact second moment
\[
  \Sigma_{\mathrm{Bern}}
  =\Delta^2\operatorname{diag}\!\left(\theta_k(1-\theta_k)\right),
\]
so we compute
$\lambda_{\max}(U_{>r}^{\top}\Sigma_{\mathrm{Bern}}U_{>r})$ without Monte
Carlo.  We also draw $T=1024$ independent Bernoulli roundings for the same
eigenbasis and target, giving a control with the same draw count and projected
dimension as the Gram--Schmidt estimate.  This paired control checks the scale
of finite-sample operator inflation.

\paragraph{Variation and draw-count convergence.}
Prefix estimates at $T\in\{256,512,1024\}$ record convergence with draw count.
The interquartile ranges summarize variation across the four observed
instances.  Prefix comparisons measure the sensitivity of the empirical
operator to draw count; \Cref{rem:disc-head-tail} states the population
condition to which these estimates relate.

\section{Dynamic implementation and numerical checks}
\label{app:disc-dynamic-protocol}

\paragraph{Implementation and work.}
The reference code uses Python and NumPy float64 arrays on an Apple M3 Max
CPU.  It
computes numerical rank once per target, at cost
$O(|U|p\min\{|U|,p\})$, then takes the first $r+1$
fractional coordinates in coordinate order and calls
\texttt{numpy.linalg.svd} on their $p\times(r+1)$ transposed row matrix
at each step.  For $p\ge r+1$, economy SVD supplies the right null vector
without allocating a $p\times p$ left factor; for $p=r$, full right singular
vectors include the null direction.  Each step costs $O(pr^2+r^3)$ arithmetic operations, and the $K=1024,r=16$
case takes 1008 steps.  A call also scans the current fractional coordinates
and snaps entries within $2\times10^{-11}$ of an endpoint.  These scans and
Python dispatch contribute to elapsed time.

The reported time charges this work to every fractional target.  With $B$
output blocks, the reference code repeats it $B$ times per activation row.
Its raw CSV separates reduction and completion times, and the JSON records
same-process float64 row-block and float32 matrix-product timings.
The run requests one thread through the BLAS/OpenMP environment variables;
the CPU, package, BLAS, and runtime-threadpool metadata
are saved in \texttt{discrepancy\_summary.json} and the manifest.

\paragraph{Residual accounting.}
The code records the final product drift and the largest relative null
residual over all reduction steps.  It also checks the final computed
objective against the conditional-expectation bound formed from the stored
endpoint.  Equation~\eqref{eq:disc-drift} explains how drift contributes to
the a posteriori certificate; the checks use the recorded floating-point
tolerances for its numerical evaluation.

\paragraph{Scaling and conditioning diagnostic.}
The diagnostic protocol combines condition numbers $1,10^6,10^{12}$ with
overall scales $10^{-8},1,10^8$ in nine $64\times32$ rank-four instances.
The orthonormal factors and fractional target are shared.  It records the
fractional count, relative step null residual, and relative product drift
$\|V^\top(x_T-x_0)\|_2/(\|V\|_F\|x_T-x_0\|_2)$.
These quantities measure arithmetic drift for specified ranks;
numerical-rank selection near its singular-value threshold remains a
separate modeling decision.

\section{Static-study protocol}
\label{app:disc-static-protocol}

\paragraph{Data and fitted model.}
We use scikit-learn Digits ($8\times8$ grayscale inputs)
\cite{alpaydin1998digits,pedregosa2011scikit}, scaling pixels by $1/16$ without
centering.  Five deterministic stratified splits allocate 60\% to fitting,
20\% to calibration, and 20\% to held-out evaluation.  Each split fits a
ridge least-squares ten-class linear model with penalty $10^{-3}$ and no intercept.
\ifdiscresults
The fit, calibration-pool, and held-out sizes are
\DiscrepancyStaticFitSize{}, \DiscrepancyStaticCalibrationPoolSize{}, and
\DiscrepancyStaticTestSize{}.
\fi

\paragraph{Grids and calibration subsets.}
Each output column receives a signed symmetric 3- or 4-bit grid whose scale is
the column's maximum absolute weight divided by $2^{b-1}-1$, where $b$ is
the bit width.  Across fits,
this choice
leaves 580--600 active and 40--60 exactly representable weights, with no clipped
entries among 640 total.  From each calibration pool, we draw twenty
class-stratified subsets of 128 examples without replacement within a subset;
different subsets may overlap.  We also use the full pool of
\ifdiscresults\DiscrepancyStaticFullCalibrationSize\else359\fi{} examples.

\paragraph{Solver and evaluation.}
Best-improvement one-flip descent starts from round-to-nearest.  It scans
every active coordinate, breaks ties by the smallest coordinate index, and stops
when no flip decreases the quadratic objective beyond the recorded
floating-point tolerance.  We compare uncentered $M=\E[x^\top x]$ with
centered covariance under fixed bias and optimal bias correction.  Both centered methods use the same weights; the corrected version adds
$c_j=-\mu e_j$ to output column $j$, where $\mu$ is the calibration mean
and $e_j$ its weight-error column.  The RTN-plus-bias control applies the
identical formula to RTN's own error columns, with the same calibration
inputs.  The held-out mean is never used to choose a correction.  Held-out product MSE is divided by split-matched round-to-nearest error
at the same bit width.  The artifact records flip counts, stopping
certificates, end-to-end times, and the floating-point tolerance.

\paragraph{An absolute mean-term scale.}
For split zero, the full calibration pool, four-bit centered rounding, and
the class-zero error column $e_0$, the measured ratio is
\[
 \frac{(\mu e_0)^2}{e_0^\top(M-\mu^\top\mu)e_0}=2.524.
\]
Averaged over all ten outputs, the mean term contributes $0.08949$ MSE and
the covariance term contributes $0.02664$ MSE.  Centering drops the larger
contribution from the calibration objective; the fixed-bias evaluation
retains both.  The raw CSV records the decomposition for every split and
subset, and the protocol identifies this fitted model and its rounded
endpoint deterministically.

\paragraph{Paired summaries.}
For each split, we pair methods on the same calibration subset, take the
median of the subset-level held-out MSE ratios, then summarize these five
split-level ratios.  The mean column is the arithmetic mean of those five
ratios; wins count splits whose paired median difference favors the first
method.  The CSV contains every subset and the JSON lists every split,
absolute MSE difference, and two-sided sign-test $p$-value.  With five
non-tied split outcomes, even five wins give $p=0.0625$; the overlapping
training and evaluation splits further limit an independence interpretation.
The intervals therefore summarize the observed splits descriptively.

\begin{table}[tbp]
  \centering
  \scriptsize
  \begin{tabular}{rrlrrr}
\toprule
Bits & Cal. & Paired comparison & Ratio [95\% RI] & Mean & Wins \\
\midrule
$3$ & 128 & Centered / RTN & $1.264\,[0.933,2.035]$ & $1.325$ & $1/5$ \\
$3$ & 128 & Centered+bias / RTN+bias & $0.869\,[0.721,0.973]$ & $0.863$ & $5/5$ \\
$3$ & full & Centered / RTN & $1.250\,[0.884,2.027]$ & $1.298$ & $1/5$ \\
$3$ & full & Centered+bias / RTN+bias & $0.842\,[0.694,0.946]$ & $0.827$ & $5/5$ \\
$4$ & 128 & Centered / RTN & $1.148\,[0.867,1.753]$ & $1.314$ & $1/5$ \\
$4$ & 128 & Centered+bias / RTN+bias & $0.667\,[0.615,0.687]$ & $0.656$ & $5/5$ \\
$4$ & full & Centered / RTN & $1.260\,[0.832,1.835]$ & $1.363$ & $1/5$ \\
$4$ & full & Centered+bias / RTN+bias & $0.627\,[0.571,0.650]$ & $0.619$ & $5/5$ \\
\bottomrule
\end{tabular}

  \caption{Paired held-out product MSE comparisons on Digits.  Ratios compare
  the methods shown in each row, so corrected weights are compared with
  corrected RTN.  RI denotes a bootstrap resampling interval over five
  split-level median ratios.}
  \label{tab:disc-static-paired}
\end{table}

\end{document}